\documentclass{article}
\usepackage{iclr2027_conference,times}

\usepackage{amsmath,amsfonts,bm}

\def\eqref#1{equation~\ref{#1}}
\def\1{\bm{1}}

\def\rva{{\mathbf{a}}}

\def\vb{{\bm{b}}}
\def\vc{{\bm{c}}}
\def\vd{{\bm{d}}}
\def\ve{{\bm{e}}}
\def\vf{{\bm{f}}}
\def\vg{{\bm{g}}}
\def\vh{{\bm{h}}}

\def\vr{{\bm{r}}}
\def\vs{{\bm{s}}}

\def\vv{{\bm{v}}}

\def\vx{{\bm{x}}}
\def\vy{{\bm{y}}}

\def\mA{{\bm{A}}}
\def\mB{{\bm{B}}}

\def\mF{{\bm{F}}}

\def\mR{{\bm{R}}}

\def\mW{{\bm{W}}}

\DeclareMathAlphabet{\mathsfit}{\encodingdefault}{\sfdefault}{m}{sl}
\SetMathAlphabet{\mathsfit}{bold}{\encodingdefault}{\sfdefault}{bx}{n}

\newcommand{\E}{\mathbb{E}}

\newcommand{\Cov}{\mathrm{Cov}}
\DeclareMathOperator*{\argmax}{arg\,max}
\DeclareMathOperator*{\argmin}{arg\,min}

\usepackage{amsmath,amssymb, amsthm}
\usepackage{graphicx}
\usepackage{hyperref}
\usepackage{url}
\usepackage{float}
\usepackage{booktabs}
\usepackage{caption}
\usepackage{subcaption}
\usepackage{color}
\usepackage{cleveref}
\crefformat{equation}{(#2#1#3)}
\Crefformat{equation}{(#2#1#3)}
\crefrangeformat{equation}{(#3#1#4) to (#5#2#6)}
\Crefrangeformat{equation}{(#3#1#4) to (#5#2#6)}
\crefmultiformat{equation}{(#2#1#3)}{ and (#2#1#3)}{, (#2#1#3)}{ and (#2#1#3)}
\Crefmultiformat{equation}{(#2#1#3)}{ and (#2#1#3)}{, (#2#1#3)}{ and (#2#1#3)}
\usepackage{algorithm}
\usepackage{algpseudocode}
\usepackage{titletoc}
\usepackage{tabularx}
\usepackage{xcolor}
\usepackage{colortbl}
\usepackage{tcolorbox}
\usepackage{multirow}
\usepackage{placeins}

\definecolor{headergray}{RGB}{85,85,85}
\definecolor{lightgray}{RGB}{245,245,245}

\newcolumntype{L}{>{\raggedright\arraybackslash}X}
\newcolumntype{C}[1]{>{\centering\arraybackslash}p{#1}}

\newtheorem{proposition}{Proposition}

\title{First-Order Steering: Translating Weight Adaptation into Activation Steering}

\author{Sri Pranav Kunda\textsuperscript{1,2} \quad
  Alexander Kurz\textsuperscript{2} \quad
  Tomáš Dominik\textsuperscript{2} \quad
  Uri Maoz\textsuperscript{2,3,4}\thanks{Corresponding author: \texttt{maoz@chapman.edu}} \\[0.5ex]
  \textnormal{\textsuperscript{1}University of California, Davis} \\
  \textnormal{\textsuperscript{2}Chapman University} \\
  \textnormal{\textsuperscript{3}University of California, Los Angeles} \\
  \textnormal{\textsuperscript{4}Caltech}
}

\iclrpreprintcopy

\begin{document}

\maketitle

\begin{abstract}
Activation steering exploits interpretable directions in the residual stream to enable inference-time manipulation of model behavior. Composing steering vectors to apply multiple target behaviors simultaneously is important in various fields—including AI alignment and safety—but remains a challenge for existing activation steering methods. In contrast, prior work in model merging shows that target behaviors represented by learned weight adaptations can be combined with high accuracy. A method that translates weight adaptations into activation steering vectors could therefore extend prior work in model merging to generate composable steering vectors that better enable simultaneous inference-time behavioral control. For this, we introduce \textsc{first-order steering}, a formulation of activation steering as a first-order approximation of weight update matrices parameterized by a vector $\vs$, and establish theoretical bounds on the approximation error of first-order steering. We then develop a novel model merging procedure, \textsc{HeRD-Merging}, which minimizes the first-order approximation error terms to enable higher first-order steering accuracy. Together, our method produces steering vectors that control both individual and composed behaviors more accurately than existing activation steering methods. Furthermore, HeRD-Merging matches the performance of conventional model-merging baselines, while producing weight adaptations that admit more accurate first-order steering vectors.
\end{abstract}

\section{Introduction}

Reliable identification of interpretable features in language models poses a central problem in mechanistic interpretability \citep{kim2018tcav, sharkey2025openproblemsmechanisticinterpretability}. Prior work has developed numerous strategies for supervised and unsupervised discovery of interpretable features in large language models (LLMs) \citep{turner2024steeringlanguagemodelsactivation, shu2025surveysparseautoencodersinterpreting}. A popular class of such methods is activation steering, which finds vector directions in the activation space of a language model that, when added to the residual stream, induce a particular behavior. Various activation steering methods---such as contrastive activation addition (CAA), representation engineering (RepE), sparse autoencoder (SAE) steering, and ODE steering (ODESteer)---have shown promise in exploiting interpretable features in the activations of LLMs to manipulate their behavior \citep{panickssery2024steeringllama2contrastive, fang-etal-2026-controllable, zou2023representationengineeringtopdownapproach, zhao2026odesteer}. However, utilizing these methods to control multiple target behaviors simultaneously remains a challenge \citep{oozeer2026linearsteeringunifiedmultiattribute, braun2025understanding}. In contrast, prior work in model merging has demonstrated that, using methods such as task arithmetic, TIES, DARE, and KnOTS, multiple target behaviors can be composed when represented as weight adaptations \citep{ilharco2023editingmodelstaskarithmetic, prabhakar2024lorasoupsmergingloras, kangaslahti2025continuous, stoica2024modelmergingsvdtie, yu2024language, yadav2023tiesmerging}. Work on RepE has also proposed procedures such as Low-Rank Representation Adaptation, which simulate activation-space interventions through corresponding parameter updates, suggesting a relationship between weight adaptation and activation steering. However, the converse, translating weight updates into activation steering, remains underexplored. Understanding the conditions under which weight adaptation can be converted into activation steering could extend existing model merging methods to activation steering, thus enabling simultaneous and interpretable inference-time control of multiple target behaviors.

Here we show that a differentiable model-merging method parameterized by steering strengths $\vs$ can be converted into activation steering by taking a first-order approximation of each hidden layer with respect to $\vs$. Each column of the resulting Jacobian matrix defines a direction in activation space, and these directions can be naturally composed through vector addition. We then show that the Hessian norm of each hidden-layer component bounds the approximation error of first-order steering, providing a theoretical measure of steering accuracy that existing methods lack. From this measure, we develop \textsc{HeRD-Merging}, a differentiable model-merging procedure that minimizes the first-order approximation error terms while preserving conventional model-merging performance. Empirically, first-order steering controls both individual and composed behaviors more accurately than existing activation steering methods, while HeRD-Merging produces weight adaptations that admit more accurate first-order steering vectors.

We make the following contributions:

\begin{enumerate}
\item We introduce \emph{first-order steering}, a method for translating differentiable model-merging procedures into activation steering. The Jacobian of each hidden layer with respect to the steering strengths yields composable activation-space directions, while the corresponding Hessian norms provide theoretical bounds on the first-order approximation error.

\item We introduce \emph{HeRD-Merging}, a differentiable model-merging procedure designed to control the error bounds governing first-order steering accuracy. By reducing the relevant Hessian-based error terms, HeRD-Merging produces weight adaptations which, when paired with first-order steering, produce more accurate activation steering vectors.

\item We evaluate first-order steering on both individual and composed behavioral control and compare HeRD-Merging with established model-merging methods. First-order steering outperforms existing activation steering methods, while HeRD-Merging improves first-order steering accuracy and maintains performance comparable to conventional model-merging baselines.

\end{enumerate}

\section{Preliminaries}

\paragraph{Activation Steering} Prior work in mechanistic interpretability has produced several steering methods that induce behaviors through additive updates to the intermediate activations of an LLM. Activation Addition (ActAdd) demonstrated that a single \emph{steering vector}, computed as the difference in residual-stream activations between a pair of contrastive prompts, can be added at a chosen layer to induce a targeted behavioral shift at inference time without any gradient-based training \citep{turner2024steeringlanguagemodelsactivation}. At inference time, a steering vector $\vv$ is applied through 
\begin{equation}
\label{equation:activation-addition}
\vh' = \vh + \alpha\vv
\end{equation}
for some steering strength parameter $\alpha \in \mathbb{R}$. A set of steering vectors $\vv_1, \dots, \vv_n$ can then be composed through vector addition so that the aggregate steering vector is $\vv = \sum_{k = 1}^n \vv_k.$  Through this principle, numerous strategies have emerged towards computing optimal steering vectors applied through \Cref{equation:activation-addition}. Contrastive Activation Addition (CAA) computes steering vectors from paired positive/negative examples but focuses on a systematic evaluation across many behaviors, showing that the resulting vectors transfer across prompt formats and can be combined with finetuning \citep{panickssery2024steeringllama2contrastive}. Representation Engineering (RepE) generalizes this idea by extracting relevant steering vectors using additional supervised and unsupervised methods such as PCA, $k$-means clustering, and logistic regression \citep{zou2023representationengineeringtopdownapproach}. Feature Guided Activation Additions (FGAA) leverages sparse-autoencoder features to select and combine directions, aiming to produce more interpretable and disentangled steering vectors than raw activation differences \citep{soo2025interpretable, cunningham2023sparseautoencodershighlyinterpretable, shu2025surveysparseautoencodersinterpreting}. Representation Surgery generalizes existing steering heuristics and reduces the computation of steering functions to a least-squares objective \citep{pmlr-v235-singh24d}. Activation Transport (AcT) reframes activation steering as an optimal-transport problem and learns a transport map that moves the distribution of source activations toward a target distribution \citep{rodriguez2024controllinglanguagediffusionmodels}.

These strategies have been shown to work in various in-distribution benchmarks, but have also demonstrated inconsistent results in out-of-distribution settings, especially when steering multiple behaviors simultaneously \citep{Scalena_2024, vanderweij2024extendingactivationsteeringbroad, braun2025understanding, tan2024analysing}. Moreover, existing progress in activation steering has only presented marginal improvements in accuracy, limiting its application in practical parameter-efficient finetuning requirements \citep{apolinario2026lancelowrankactivation, koike-akino2025quantumpeft}. More recent methods---such as ODESteer, MiMiC, RE-Control, and AcT---introduce novel paradigms of activation steering to increase accuracy but deviate in formulation from \Cref{equation:activation-addition}, meaning they lack a nontrivial procedure for composing generated steering vectors \citep{zhao2026odesteer, heyman2026steer, kong2024aligning, rodriguez2024controllinglanguagediffusionmodels}. In contrast, we show that first-order steering outperforms multiple activation steering paradigms while offering accurate composability of vectors in a manner identical to activation addition.

\paragraph {Model Interpolation and Merging} With sufficient data, low-rank adaptation (LoRA) provides an efficient method for finetuning a model \citep{hu2021loralowrankadaptation}. However, LoRA on its own does not provide a framework for merging or interpolating adapters. Consequently, techniques of \emph{model merging} are commonly applied to combine the capabilities of multiple independently finetuned LoRA adapters. One such technique—task arithmetic—defines \emph{task vectors} as the element-wise difference, $\Delta \mW$, between finetuned and base model weights. These vectors can then be added or subtracted to compose or suppress behaviors \citep{ilharco2023editingmodelstaskarithmetic}. In particular, for a collection of LoRA adapters $\Delta \mW_1, \dots, \Delta \mW_n,$ task arithmetic generates a unified LoRA adapter $$\Delta \mW(\vs) = \sum_{k = 1}^n s_k\Delta \mW_k$$ for some $\vs \in \mathbb{R}^n.$ This strategy can accurately produce simultaneous behavioral control but can be improved through various model merging methods. TIES-Merging trims low-magnitude parameters and resolves sign conflicts prior to aggregation \citep{yadav2023tiesmerging}. DARE sparsifies task vectors further by randomly dropping and rescaling delta parameters \citep{yu2024language}. KnOTS decomposes task vectors via SVD before merging, operating in a shared low-dimensional subspace to reduce cross-task interference and improve the composability of merged models through existing methods like TIES and DARE \citep{stoica2024modelmergingsvdtie}.

Model merging methods demonstrate that weight-space adaptations can be composed while retaining strong per-task performance, providing a natural foundation for constructing composable activation steering vectors using first-order steering. We therefore generalize the update rule of existing model merging methods to develop HeRD-Merging, a differentiable model merging procedure which encourages greater first-order steering performance while matching existing baselines in merging accuracy. 

\section{First-Order Steering}

In this section, we present our first-order steering methodology and establish theoretical bounds on its accuracy. Let $f_\theta$ denote a frozen autoregressive transformer with $L$ layers and hidden dimension $d$, and let $\mW_p^{\mathrm{base}}$ denote the frozen pretrained weight matrix of $f_\theta$ at projection head $p$. We denote the hidden layer of $f_\theta$ at layer $\ell$ by $\vh_\ell$, with the $i$th component denoted $h_{\ell, i}.$ 

Let $n$ be the number of target behaviors (i.e. the number of LoRA adapters) and let $g : \mathbb{R}^{d + n} \to \mathbb{R}^d$, with evaluation denoted $g(\vx, \vs)$ for vectors $\vx \in \mathbb{R}^d$ and $\vs \in \mathbb{R}^n.$ Then $J_\vs g$ (resp. $J_\vx g$) denotes the Jacobian matrix resulting from viewing $g$ as a function with $\vx$ (resp. $\vs$) fixed. Similarly, for scalar-valued $g_i$, $H g_i$ denotes the Hessian and $H_\vs g_i$ the Hessian with $\vx$ fixed.

\label{section:fo-steering}
Let $\mathcal{P}$ be the set of projection head indices and let $\Delta \mW_{1,p}, \dots, \Delta \mW_{n,p}$ be LoRA adapters applied at $p \in \mathcal{P}$. Then task arithmetic defines a parameterized update
\begin{equation}
    \Delta \mW_p(\vs) = \sum_{k = 1}^n s_k \Delta \mW_{k, p}, \qquad \mW_p(\vs) = \mW_p^{\mathrm{base}} + \Delta \mW_p(\vs).
\end{equation}
for some $\vs \in \mathbb{R}^n.$ Henceforth, we omit the indexing by $p$ for readability when it is not required. Let $f_\theta^{\mathrm{TA}}$ be the transformer with the task arithmetic updates added at each $p \in \mathcal{P}$, and let $\vh_\ell'$ be the hidden layer of $f_\theta^{\mathrm{TA}}$ at layer $\ell$, with the $i$th component denoted $h_{\ell, i}'$. Under this formulation, $\vh_{\ell}'$ admits the first-order approximation \begin{equation}
\label{equation:first-order-steering}
h_{\ell, i}'(\vx, \vs) = h_{\ell, i}(\vx) + \nabla_\vs h_{\ell, i}' (\vx, 0)^\top \vs + r_i(\vs),
\end{equation} where a bound on the remainder $\vr$ is given by Taylor's theorem
\begin{equation}
\label{equation:remainder-bound}
\left\|\vr(\vs)\right\| \leq \frac{1}{2} \sum_{i = 1}^d \sup_{\vs^* \in [\mathbf{0}, \vs]} \left\|H_\vs h_{\ell, i}'(\vx, \vs^*)\right\|_F \|\vs\|^2,
\end{equation}
with $H_\vs h_{\ell, i}'$ denoting the Hessian matrix of $h_{\ell, i}'$ with respect to $\vs$, $\|\cdot\|_F$ denoting the Frobenius norm for matrices, and $[\mathbf{0}, \vs] \subseteq \mathbb{R}^n$ denoting the line segment from $\mathbf{0}$ to $\vs$. Thus, with a sufficiently small Hessian of $h_{\ell, i}'(\vx, \vs),$ the inner product $\nabla_\vs h_{\ell, i}'(\vx, 0)^\top \vs$ yields activation addition terms 
\begin{equation}
\label{equation:first-order-steering-composition}
h_{\ell, i}'(\vx, \vs) \approx h_{\ell, i}(\vx) + \sum_{k = 1}^n \frac{\partial h_{\ell, i}'}{\partial s_k}(\vx, 0) s_k.
\end{equation} This is an instance of \Cref{equation:activation-addition}, besides the dependence of the activation addition term on the residual input $\vx$. To remove this dependency, we compute steering vectors for each target behavior optimal over some dataset of $M$ activation samples $\mathcal{D}$ so that $\mA^* = \argmin_\mA \frac{1}{M} \sum_{m = 1}^M \|J_\vs\vh_{\ell}' (\vx_m, 0) - \mA\|_F^2$. The analytical solution to this least-squares problem is given by the mean \begin{equation}
\label{equation:least-squares-solution}
\mA^* = \frac{1}{M} \sum_{m = 1}^M J_\vs\vh_{\ell}' (\vx_m, 0).
\end{equation} This formulation has the following advantages: (i) steering vectors individual to each target behavior can be read out directly from the columns of $\mA^*$; (ii) steering vector composition is given by summation through the matrix product $\mA^* \vs$; (iii) the derivation of steering vectors from LoRA adapters allows for the simultaneous computation of steering vectors for any layer where an adapter is placed. 

We now bound the error accumulated from \Cref{equation:first-order-steering-composition,equation:least-squares-solution}. Intuitively, controlling the variance of $J_\vs \vh_\ell'$ with respect to $\vx$ over some dataset reduces the dependency on the prompt and therefore should reduce the least squares error. This can be motivated through the following propositions. 

\begin{proposition}[Gaussian Poincar\'e inequality]
\label{theorem:gaussian-poincare}
Let $\rva = (\rva_1, \dots, \rva_a)$ be a vector of independent standard Gaussian random variables. If $\vg : \mathbb{R}^a \to \mathbb{R}^b$ is smooth, then $$\|\Cov(\vg(\rva))\|_F \leq \mathbb{E}[\|J \vg(\rva)\|_F^2].$$
\end{proposition}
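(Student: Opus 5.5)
The plan is to reduce the vector statement to the classical scalar Gaussian Poincar\'e inequality,
\[
\operatorname{Var}(\phi(\rva)) \le \E\|\nabla\phi(\rva)\|^2,
\]
valid for smooth $\phi:\mathbb{R}^a\to\mathbb{R}$ with square-integrable gradient. We take this inequality as known; if needed, it can be proved by Gaussian interpolation or by tensorizing the one-dimensional inequality through the Efron--Stein decomposition. Throughout, we assume the moments involved are finite, since otherwise the right-hand side is $+\infty$ and there is nothing to prove.

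First, write $\Sigma = \Cov(\vg(\rva))$, a $b\times b$ positive semidefinite matrix. Since $\Sigma$ is PSD with eigenvalues $\lambda_j\ge 0$, we have
\[
\|\Sigma\|_F = \Big(\sum_j \lambda_j^2\Big)^{1/2} \le \sum_j\lambda_j = \operatorname{tr}\Sigma.
\]
It therefore suffices to bound the trace.

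Second, expand the trace as a sum of componentwise variances:
\[
\operatorname{tr}\Sigma = \sum_{j=1}^b \operatorname{Var}(g_j(\rva)).
\]
Applying the scalar Poincar\'e inequality to each component $g_j$ gives $\operatorname{Var}(g_j(\rva)) \le \E\|\nabla g_j(\rva)\|^2$. Summing over $j$, and using that the rows of $J\vg$ are the transposed gradients $\nabla g_j^\top$, we get
\[
\sum_{j=1}^b \E\|\nabla g_j(\rva)\|^2 = \E\|J\vg(\rva)\|_F^2.
\]
Chaining the three displays yields the claim.

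The main obstacle is the scalar inequality itself, if it must be proved rather than cited. In that case I would use the Ornstein--Uhlenbeck semigroup $P_t$. Write
\[
\operatorname{Var}\phi = \int_0^\infty \E\big[\nabla\phi\cdot \nabla P_t\phi\big]\,dt,
\]
then use the commutation $\nabla P_t\phi = e^{-t}P_t\nabla\phi$ together with Cauchy--Schwarz and Jensen's inequality for $P_t$. This gives $\operatorname{Var}\phi \le \int_0^\infty e^{-t}\,dt\;\E\|\nabla\phi\|^2 = \E\|\nabla\phi\|^2$. Everything else in the argument is linear algebra.
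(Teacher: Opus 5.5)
Your proposal is correct and matches the paper's argument: the paper also applies the scalar Gaussian Poincar\'e inequality (cited from Chatterjee) to each coordinate of $\vg$, sums the bounds to control $\operatorname{tr}\Cov(\vg(\rva))$ by $\E\|J\vg(\rva)\|_F^2$, and concludes using $\|\Sigma\|_F \le \operatorname{tr}\Sigma$ for positive semidefinite $\Sigma$. Your optional Ornstein--Uhlenbeck sketch of the scalar case is sound, though the paper only cites that inequality rather than proving it.
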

The scalar case $\mathrm{Var}(g(\rva)) \leq \mathbb{E}\|\nabla g(\rva)\|^2$ is stated in \citep{chatterjee2007fluctuationseigenvaluessecondorder}. The form above follows by applying it to each coordinate of $\vg$ and using $\|\Cov(\vg(\rva))\|_F \leq \operatorname{trace}(\Cov(\vg(\rva)))$ for positive semidefinite matrices. We now relate the Jacobian to the least squares error (\ref{equation:least-squares-bound}).

\begin{proposition}
\label{theorem:error-bound-hessian}
Let $S \subseteq \mathbb{R}^{d + n}$ be an open rectangle containing $\mathcal{D} \times \{0\} \subseteq \mathbb{R}^{d + n}$ and define $Q = \sup_{(\vx_0, \vs_0) \in S}\|J_\vx J_\vs \vh_\ell'(\vx_0, \vs_0)\|_F$ and $L = \sup_{\vx_a, \vx_b \in \mathcal{D}} \|\vx_a - \vx_b\|.$ Then 
\begin{equation}
\label{equation:least-squares-bound}
\frac{1}{M} \sum^M_{m = 1} \|J_\vs\vh_{\ell}' (\vx_m, 0) - \mA^*\|^2_F \leq \frac{L^2Q^2}{2}.
\end{equation}
\end{proposition}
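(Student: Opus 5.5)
The plan is to use the variance identity for the mean: the average squared deviation from the mean equals half the average squared pairwise distance. Write $F(\vx) = J_\vs \vh_\ell'(\vx, 0)$, a matrix-valued function of $\vx$. Since $\mA^*$ is the sample mean of $F(\vx_1),\dots,F(\vx_M)$ by \Cref{equation:least-squares-solution}, expanding the Frobenius inner product gives
\begin{equation*}
\frac{1}{M}\sum_{m=1}^M \|F(\vx_m) - \mA^*\|_F^2 = \frac{1}{2M^2}\sum_{a=1}^M\sum_{b=1}^M \|F(\vx_a) - F(\vx_b)\|_F^2 .
\end{equation*}
This identity is routine: expand both sides and use $\sum_m F(\vx_m) = M\mA^*$. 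It reduces the problem to bounding each pairwise difference uniformly by $LQ$. The factor $\tfrac12$ in the target then appears automatically.

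Second, I will bound $\|F(\vx_a) - F(\vx_b)\|_F$ by a mean value argument along the segment from $\vx_a$ to $\vx_b$. Because $S$ is an open rectangle, it is convex. It contains $(\vx_a,0)$ and $(\vx_b,0)$, so it contains the whole segment $\gamma(t) = (\vx_b + t(\vx_a - \vx_b), 0)$ for $t \in [0,1]$. Applying the fundamental theorem of calculus entrywise,
\begin{equation*}
F(\vx_a) - F(\vx_b) = \int_0^1 \bigl(J_\vx J_\vs \vh_\ell'(\gamma(t))\bigr)[\vx_a - \vx_b]\, dt,
\end{equation*}
where $J_\vx J_\vs \vh_\ell'$ is viewed as a linear map from $\mathbb{R}^d$ to $d\times n$ matrices, i.e.\ a third-order tensor. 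The Frobenius norm of that tensor dominates its operator norm from $(\mathbb{R}^d,\|\cdot\|)$ to $(\mathbb{R}^{d\times n},\|\cdot\|_F)$; this follows from Cauchy--Schwarz applied to each entry. Hence each integrand has norm at most $Q\|\vx_a - \vx_b\| \le QL$, and so $\|F(\vx_a)-F(\vx_b)\|_F \le LQ$.

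Combining the two steps gives $\frac{1}{2M^2}\sum_{a,b} L^2Q^2 = \frac{L^2Q^2}{2}$, which is the claim.

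The main point needing care is the second step. One must interpret $\|J_\vx J_\vs \vh_\ell'\|_F$ as the entrywise Frobenius norm of the third-order tensor and check that it controls the directional derivative in the required norm. One must also note that convexity of $S$ is what keeps the segment inside the region where the supremum $Q$ is taken. Smoothness of $\vh_\ell'$, which justifies the integral form, is implicitly assumed, as in the Taylor bound \Cref{equation:remainder-bound}. The Gaussian Poincar\'e inequality (\Cref{theorem:gaussian-poincare}) is not needed for this deterministic statement; it only motivates the interpretation in terms of variance.
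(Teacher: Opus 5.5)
Your proposal is correct and follows the paper's proof. The paper also bounds each pairwise difference by $Q\|\vx_b-\vx_a\|\le QL$ using the mean value inequality on the open rectangle $S$, then applies the identity $\frac{1}{M}\sum_m\|g(\vx_m)-\mA^*\|_F^2=\frac{1}{2M^2}\sum_{a,b}\|g(\vx_b)-g(\vx_a)\|_F^2$; your segment-integral argument, using convexity of $S$ and the fact that the tensor Frobenius norm dominates the operator norm, simply spells out the ``minimal effort'' extension the paper leaves implicit.
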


Since $Q \leq \sup_{(\vx, \vs) \in S} \sum_{i=1}^d \|H h_{\ell,i}' (\vx, \vs)\|_F$, the same Hessian norm that bounds the remainder in \Cref{equation:remainder-bound} also bounds $Q$. Regularizing the Hessian norm therefore simultaneously manages the first-order approximation error and the least squares error, suggesting a quantifiable signal of steering accuracy under our methodology.

\section{Hessian Regularized Differentiable Merging (HeRD-Merging)}

In this section, we present a differentiable model-merging procedure that controls the size of the bound (\ref{equation:remainder-bound}) and allows for the minimization of $\|J_\vx J_\vs \vh_\ell'(\vx, \vs)\|_F$ as in \Cref{theorem:error-bound-hessian}. In \Cref{section:experiments}, we show that this effort leads to improved steering performance while retaining merging performance. 

\subsection{The HeRD Adapter Function}

Let $\Delta \mW_1, \dots, \Delta \mW_n$ be a collection of LoRA adapters and let $\mathcal{C} = \{S \subseteq \{1, \dots, n\} : |S| \geq 2\}.$ Then a \emph{HeRD adapter} is a function 
\begin{equation}
\label{equation:herd-adapter}
\Delta \mW(\vs) = \sum_{k = 1}^n s_k \Delta \mW_k + \sum_{S \in \mathcal{C}} \eta_S \left(\prod_{t \in S} s_t \mR_S \right),
\end{equation} where $\mR_S = \frac{1}{r_S} \mB_S \mA_S$ for some chosen rank $r_S$ and $\eta_S, \mR_S$ are learnable parameters with $0 \leq \eta_S \leq \eta_{\mathrm{max}}$. The second term adds a scaled corrective low-rank matrix $\mR_S$ weighted by the intensity of each interaction $\prod_{t \in S} s_t.$ Fixing $\eta_S = 0$ recovers task arithmetic as a special case. Evidently, this makes $\Delta \mW$ smooth in $\vs$. Moreover, writing $\mR_S$ with a low-rank factorization allows for a constrained maximum rank of the update $\Delta \mW(\vs)$. Let $f^\mathrm{HeRD}_\theta$ be the model $f_\theta$ with the HeRD adapters $\Delta \mW_p(\vs)$ at each projection head index $p \in \mathcal{P}.$ Formulated as is, first-order steering with HeRD-Merging produces steering vectors equivalent to task arithmetic. This is handled by using the generalized first-order steering formulation of \Cref{equation:generalized-first-order-steering} described in \Cref{section:first-order-steering-generalizes-ta}. Importantly, this does not change the Hessian regularization results of \Cref{section:fo-steering}.

Optimizing the matrices $\mR_S$ for controlling the Hessian norms is computationally expensive, as it would require materializing multiple mixed Jacobian matrices through each transformer layer. Instead, we first fix $\eta_S = 1$ for all $S$ and train the matrices $\mR_S$ to encourage merging accuracy using gradient descent. Then, we fix the matrices $\mR_S$ and compute the optimal $\eta_S$, which reduces the problem to a nonlinear least squares objective. This penalizes the Hessian norm without computing any derivatives with respect to the matrices $\mR_S$. 

\subsection{Distribution Loss by Gradient Descent}

Fix an autoregressive context $\vc = (\vx, \vy_{<t})$ and let $\mathcal{V}$ be the vocabulary of the transformer $\vf_\theta^{\mathrm{HeRD}}$. We define $p_\vs(\vv\mid\vc)$ as the next-token probability assigned to $\vv \in \mathcal{V}$ given context $\vc$ and steering strengths $\vs$. In particular, let $p_0(\vv\mid\vc)$ and $p_k(\vv\mid\vc)$ denote the next-token probabilities for $\vs = \mathbf{0}$ and $\vs = \ve_k$ respectively. Then we define the interpolation of the distributions $p_k$ at $\vs$ as $$\log q_\vs(\vv \mid \vc) = \log p_0(\vv \mid \vc) + \sum_{k = 1}^n s_k(\log p_k(\vv \mid \vc) - \log p_0(\vv \mid \vc)).$$ Through normalization by the partition function $Z_\vs(\vc) = \sum_{\vv \in \mathcal{V}} q_\vs(\vv \mid \vc),$ we can form $q_\vs$ into a probability measure $Q_\vs(\vv \mid \vc) = q_\vs(\vv \mid \vc)/Z_\vs(\vc).$ From interpolation, this probability measure has the property that $Q_{\ve_k} = p_k$ and $Q_0 = p_0.$ We can now define the distribution-matching objective of our merging procedure over some grid $G \subseteq [0, 1]^n$ as $$\mathcal{L}_\textrm{dist}(R) = \mathbb{E}_{\vc \sim \mathcal{D}} \mathbb{E}_{\vs \sim U(G)}[KL(Q_\vs(\cdot \mid \vc), p_\vs(\cdot \mid \vc))],$$ where $U(G)$ denotes the uniform distribution over the set $G$ and $\mathcal{D}$ is some dataset of prompts. This objective can then be minimized through gradient descent for all layers simultaneously.

\subsection{Hessian Regularization by the Levenberg-Marquardt Algorithm}

Subsequently, we fix the matrices $\mR_S$ in \Cref{equation:herd-adapter} and optimize $\boldsymbol{\eta} = (\eta_S)_{S \in \mathcal{C}}$ by constructing a residual vector of objectives $r(\boldsymbol{\eta})$ and minimizing the nonlinear least squares objective \begin{equation}
\label{equation:nonlinear-least-squares-objective}
\min_{\boldsymbol{\eta} \in [0, \eta_{\mathrm{max}}]^{|\mathcal{C}|}} \frac 1 2 \|\vr(\boldsymbol{\eta})\|_2^2
\end{equation} with the Levenberg-Marquardt Algorithm (LMA) \citep{levenbergmarquardt}. For each $\boldsymbol{\eta},$ we define residuals $r_1(\boldsymbol{\eta}), r_2(\boldsymbol{\eta}), r_3(\boldsymbol{\eta}), r_4(\boldsymbol{\eta})$ and define $\vr(\boldsymbol{\eta}) = \sqrt{2} [\sqrt{r_1(\boldsymbol{\eta})} \, \sqrt{r_2(\boldsymbol{\eta})} \, \sqrt{r_3(\boldsymbol{\eta})} \, \sqrt{r_4(\boldsymbol{\eta})}].$

\paragraph{Distribution-matching.} We retain the distribution-matching objective of the previous stage by defining the residual $r_1 = \mathrm{KL}(Q_\vs(\cdot \mid \vc), p_\vs(\cdot \mid \vc)).$ This prevents divergence from distribution alignment meant to be achieved in the previous stage of training. 

\paragraph{Hessian regularization.} By the linearity of the derivative, it does not suffice to merely add the Jacobian matrix norm as a residual. Instead, we optimize the size of the mixed Jacobian matrix relative to the Jacobian matrix. Let $\vb \sim U(G)$ uniformly and let $\vs^*$ be the unit vector constructed by uniformly sampling from $\{1, -1\}^n$ and then scaling by $\frac{1}{\sqrt{n}}$. We define the residual $$r_2 = \frac{\|J_\vs (J_\vs \vh_\ell (\vx, \vb) \cdot \vs^*) \cdot \vs^*\|}{\|J_\vs \vh_\ell (\vx, \vb) \cdot \vs^*\|}.$$ Furthermore, the mixed partial derivatives with respect to $\vx$ can be directly controlled through the total variance over a dataset $\mathcal{D}$ of activations produced by various prompts as $$r_3 = \frac{\|\mathrm{Cov}_{\vx \sim \mathcal{D}}[J_\vs \vh_\ell (\vx, \vb) \cdot \vs^*]\|_F}{\mathbb{E}[\|J_\vs \vh_\ell (\vx, \vb) \cdot \vs^*\|^2]}.$$ By \Cref{equation:remainder-bound} and \Cref{theorem:error-bound-hessian}, $r_2$ and $r_3$ work to cheaply penalize approximations of relevant blocks of $\|H h_{\ell, i}\|_F.$ 

\paragraph{Correction term magnitude.} The parameters $\eta_S$ also serve to control the magnitude of the matrices $\mR_S$ relative to the LoRA adapters. So, we add the residual $$r_4 = \sum_{S \in \mathcal{C}} \frac{\eta_S \|\mR_S\|_F^2}{\sum_{k = 1}^n \|\Delta \mW_k\|_F^2}.$$ This allows $\boldsymbol{\eta}$ to simultaneously prevent the matrices from overcorrecting the learned behavioral effect of the LoRA adapters. Since each residual component is dependent on $\boldsymbol{\eta}$, this supplies the function $\vr(\boldsymbol{\eta})$ for the nonlinear least-squares problem \Cref{equation:nonlinear-least-squares-objective}. Each minibatch of the LMA resamples the JVP direction $\vs^*$ to increase the coverage in the interior of $[0, 1]^n.$

\section{Experiments}
\label{section:experiments}

Our methodology aims to facilitate the computation of accurate and composable steering vectors across multiple target behaviors and models. First-order steering offers a framework for computing these steering vectors, while HeRD-Merging is constructed to produce more accurate steering vectors when paired with first-order steering. In this section, we evaluate first-order steering against existing activation steering methods and demonstrate that HeRD-Merging improves first-order steering performance relative to task arithmetic. As a secondary result, we also show that HeRD-Merging remains competitive with existing model merging approaches, supporting its practical use as an intermediate between a collection of LoRA adapters and their corresponding first-order steering vectors.

\subsection{Setup and Training Details}
\label{section:experiment-setup}

To evaluate our techniques jointly, we train LoRA adapters on steering directions which are semantically independent of each other so that the accuracy of merging adapters and steering vectors can be tested meaningfully. The directions should also be a reasonably established use case of activation steering, so that they can be benchmarked against other steering methods. For this, we choose three distinct behavioral manipulations in the model's output style:

\begin{enumerate}
  \item{\textbf{Analogy Usage:} The model is trained to answer questions by constructing a heavy, overly complicated analogy and responding in the context of this analogy.}
  \item{\textbf{Bulletpoint Usage:} The model is trained to format all of its replies in bulletpoints, irrespective of how appropriate it is for the particular user query.} 
  \item{\textbf{Sophisticated Language:} The model is trained to speak in highly sophisticated, exaggerated, and archaic language.} 
\end{enumerate}

Because the model can measurably exhibit any combination of analogy use, bulletpoint use, and sophisticated language, these directions have meaningful semantic composability. They are also sufficiently complex for both activation steering and model merging, while being in line with the benchmarking methods of prior works in activation steering \citep{11224465}. To train the models, we first synthetically generate data for all of the directions using GPT-5.5. Generation proceeds by providing the model with a user conversation from the UltraChat dataset \citep{ding2023enhancingchatlanguagemodels} and instructing GPT-5.5 to re-generate model responses to user prompts according to the desired behavioral direction. The resulting datasets each consist of approximately $10,000$ samples. Examples of synthetically generated outputs for each target behavior are available in \Cref{section:dataset-examples}.

We report our experimental results for three model families of varying sizes. We run matched experiments on Qwen2.5-VL-3B, Gemma-4-E2B, and Mistral-7B-Instruct-v0.3 \citep{bai2025qwen25vltechnicalreport, gemmateam2026gemma4technicalreport, jiang2023mistral7b}. We train a set of adapters for each behavioral direction independently. For each model, we freeze the base model and attach LoRA adapters to each projection head of the model with LoRA rank $8$. Adapters are trained using the Adam optimizer \citep{kingma2017adammethodstochasticoptimization} for $100$ steps with a minibatch size of $2$, $16$ gradient accumulation steps, and a sequence length of $512$.

\subsection{DeBERTa-v3 Classifiers}

One advantage of the chosen steering directions is that they are suitable for computationally inexpensive evaluation by a classification model. Because conversations in our dataset are generated from UltraChat, we use those as labeled negative examples and the synthetically generated conversations as labeled positive examples. We split this dataset into $80\%$ training, $10\%$ validation, and $10\%$ test. The training data is then used to finetune the DeBERTa-v3-base model, which has shown reliable agreement with human evaluators on classification tasks once trained \citep{he2023debertav, li2024llmsasjudgescomprehensivesurveyllmbased}. Due to the lexical distinctness of each steering direction, the classifiers achieve near-perfect accuracy on the test sets (containing $1994$ samples). We also compute thresholds for binary classification using the validation accuracy as the heuristic. Detailed classifier results are reported in \Cref{section:classifier-information}.

\subsection{Activation Steering Experiments}
\label{section:steering-results}

In principle, first-order steering can be applied at all layers where there are weight updates. This is advantageous because it can lead to greater performance and higher expressivity. However, because baseline activation steering methods exhibit output degradation when concurrently applied at too many model layers, we conservatively follow the evaluation approaches common in prior work and identify a single common layer reasonable to use for all of the methods \citep{zhao2026odesteer, sharma2026coldsteer}. For each target behavior $k$, we maximize the normalized mean activation difference generated by contrastive activation distributions $\mathcal{D}^+_k, \mathcal{D}^-_k$ given by $$S_{\ell,k} = \frac{\left\|\mathbb E_{\vx \sim \mathcal{D}^+_k}[h_{\ell}(x)] - \mathbb E_{\vx \sim \mathcal{D}^-_k}[h_\ell(x)] \right\|^2}{\sum_i\operatorname{Var}_{\vx \sim \mathcal{D}^+_k}(h_{\ell,i}(x)) + \sum_i\operatorname{Var}_{\vx \sim \mathcal{D}^-_k}(h_{\ell,i}(x))}$$ for each layer $\ell$. We then choose the optimal layer by $\ell^*_k = \argmax_\ell S_{\ell, k}.$ We train HeRD adapters at that layer by sampling from a uniformly spaced grid $G \subseteq [0, 1]^n$ and training the matrices $\mR_S$ for $100$ gradient descent steps using the Adam optimizer \citep{kingma2017adammethodstochasticoptimization}. The LMA is then performed with a mini-batch size of $1$, $\eta_{\mathrm{max}} = 1$, and an initial damping parameter of $\mu_0 = 0.1$ for $12$ steps. We then compile a separate set of test prompts independent of the synthetically generated datasets used to train the LoRA adapters. These test prompts are used to produce $100$ outputs from the model for each of the $2^n$ corners of $[0, 1]^n$, resulting in $800$ outputs per steering method. These outputs are then evaluated by the DeBERTa-v3 classifier to produce behavioral adherence rates for the merged models. We also report confidence intervals using non-parametric percentile bootstrapping on $2000$ prompt-level resamples, computing the accuracies for all $8$ combinations of steering directions with each resample. Finally, we take the mean of the Jacobian in the first-order approximation over user prompts from the UltraChat training dataset to obtain the steering vector from first-order steering.

\FloatBarrier

\begin{table}[!ht]
\centering
\caption{Overall behavioral adherence (\%) of activation steering methods, averaged across all three models. Steering is applied at a single common layer for all methods. Adherence for each behavior is computed across all combinations of target behaviors (single, pairs, and all three) that include that behavior. We report it individually for each behavior in its corresponding column as well as on average (right-most column). $\dagger$ denotes our methods. Bold indicates the best result per column.}
\footnotesize
\setlength{\tabcolsep}{4pt}
\begin{tabular}{@{}lcccc@{}}
\toprule
\textbf{Steering Method} & \textbf{Analogy Use} & \textbf{Bulletpoint Use} & \shortstack{\textbf{Sophisticated}\\\textbf{Language}} & \textbf{Average} \\
\midrule
First-Order (HeRD-Merging)$^\dagger$    & 19.8 & \textbf{65.9} & \textbf{53.6} & \textbf{46.4} \\
First-Order (Task Arithmetic)$^\dagger$ & \textbf{20.2} & 63.5 & 44.2 & 42.6 \\
\midrule
ODESteer    & 0.2 & 37.8 & 31.5 & 23.2 \\
Linear-AcT  & 1.1 & 45.8 & 44.1 & 30.3 \\
RE-Control  & 0.4 & 32.6 & 5.7  & 12.9 \\
CAA         & 2.1 & 42.9 & 37.3 & 27.4 \\
RepE        & 2.6 & 42.9 & 26.9 & 24.1 \\
\bottomrule
\end{tabular}
\label{table:overall-activation-steering-adherence}
\end{table}

First-order steering is compared to CAA, Linear-AcT, RE-Control, RepE, and ODESteer \citep{panickssery2024steeringllama2contrastive, rodriguez2024controllinglanguagediffusionmodels, kong2024aligning, zou2023representationengineeringtopdownapproach, zhao2026odesteer}. Similarly to the merging experiment, we ensure a fair comparison by using the best-performing hyperparameters described in the paper that introduces each method. CAA computes steering vectors by taking the mean differences in contrastive positive and negative pairs, provided by our synthetically generated datasets. Linear-AcT fits an input-dependent linear map to each activation to align the coordinate-wise observed activation distributions for positive prompts. RE-Control fits a small multi-layer perceptron (MLP) with output in $[0, 1]$ for $200$ steps of the Adam optimizer \citep{kingma2017adammethodstochasticoptimization}. The MLP predicts whether a particular behavior is met and computes an input-dependent steering vector through $30$ online gradient descent steps, using the MLP as the minimization objective. RepE computes steering vectors as the first principal component of centered mean differences between positive and negative pairs. ODESteer uses $10$ steps of Euler's method along the gradient of a log-density ratio barrier function using a degree-two polynomial count sketch with $\gamma = 0.1, c_0 = 1.0$.

Aggregating the results across all three models in \Cref{table:overall-activation-steering-adherence}, first-order steering with HeRD-Merging achieves an average behavioral adherence of $46.4\%$, compared with $42.6\%$ for first-order steering with task arithmetic. The strongest non-first-order baseline is Linear-AcT, at $30.3\%$, which is $16.1$ percentage points away from first-order steering with HeRD-Merging. First-order steering also obtains the highest aggregate adherence for all three axes, exceeding the strongest non-first-order results by $17.6$, $20.1$, and $9.5$ percentage points, respectively. 

We also present more detailed adherence results for every model in \Cref{table:activation_steering_adherence}. Here too first order steering with HeRD-Merging achieves the highest average behavioral adherence when averaging across the three target behaviors on all three models (right-most column). $64.7\%$ on Qwen2.5-VL-3B, $42.5\%$ on Mistral-7B-Instruct-v0.3, and $32.0\%$ on Gemma-4-E2B. These exceed the strongest non-first-order baseline for each model, Linear-AcT ($25.4\%$) on Qwen2.5, ODESteer ($39.2\%$) on Mistral, and Linear-AcT ($31.3\%$) on Gemma, by $39.3$, $3.3$, and $0.7$ percentage points respectively. First-order steering with task arithmetic also exceeds these baselines in mean adherence, reaching $55.4\%$, $40.7\%$, and $31.8\%$, respectively. Relative to task arithmetic, HeRD-Merging improves mean adherence by $9.3$, $1.8$, and $0.2$ percentage points respectively. Although the benefit is most pronounced on Qwen, the overall benefit of the trained terms $\mR_S$ is supported by our experiments. Note that while individual baselines are sometimes stronger for a particular model and target behavior, no baseline method is consistently second best. Additional results for first-order steering applied at all layers using both HeRD-Merging and task arithmetic are reported in \Cref{section:all-layer-experiments}.

\begin{table*}[t]
\centering
\caption{The detailed behavioral adherence (\%) results of \Cref{table:overall-activation-steering-adherence} expanded and reported for each model with its 95\% confidence intervals.}
\footnotesize
\setlength{\tabcolsep}{4pt}
\begin{tabular}{@{}llcccc@{}}
\toprule
\textbf{Model} & \textbf{Steering Method} & \textbf{Analogy Use} & \textbf{Bulletpoint Use} & \shortstack{\textbf{Sophisticated}\\\textbf{Language}} & \textbf{Average} \\
\midrule
\multirow{7}{*}{\rotatebox[origin=c]{90}{Qwen2.5-VL-3B}}
 & First-Order (HeRD-Merging)$^\dagger$       & 42.2 [36.8, 48.0] & \textbf{74.2 [68.2, 80.0]} & \textbf{77.8 [73.0, 82.5]} & \textbf{64.7} \\
 & First-Order (Task Arithmetic)$^\dagger$    & \textbf{43.0 [38.2, 47.8]} & 67.0 [61.0, 73.0] & 56.2 [49.5, 63.2] & 55.4 \\
\cmidrule(l){2-6}
 & ODESteer                       & 0.0 [0.0, 0.0]  & 21.5 [14.0, 29.5]          & 5.5 [2.5, 9.2]    & 9.0  \\
 & Linear-AcT                     & 1.6 [1.0, 2.2]  & 29.5 [27.0, 32.1]          & 45.1 [42.1, 48.1] & 25.4 \\
 & RE-Control                     & 0.2 [0.0, 0.5]  & 30.6 [27.1, 34.1]          & 3.2 [1.9, 4.6]    & 11.3 \\
 & CAA                            & 5.3 [4.1, 6.5]  & 25.9 [23.3, 28.6]          & 42.4 [39.2, 45.8] & 24.5 \\
 & RepE                           & 0.2 [0.0, 0.5]  & 29.6 [26.3, 33.1]          & 3.4 [2.2, 4.8]    & 11.1 \\
\midrule
\multirow{7}{*}{\rotatebox[origin=c]{90}{\shortstack{Mistral-7B \\ Instruct-v0.3}}}
 & First-Order (HeRD-Merging)$^\dagger$ & 16.6 [15.3, 17.8] & 63.5 [60.7, 66.5] & 47.5 [44.3, 50.8] & \textbf{42.5} \\
 & First-Order (Task Arithmetic)$^\dagger$    & \textbf{17.1 [15.8, 18.3]} & 63.7 [61.0, 66.5] & 41.3 [38.7, 44.0] & 40.7 \\
\cmidrule(l){2-6}
 & ODESteer                       & 0.3 [0.0, 0.8]  & \textbf{70.8} [67.0, 74.7]          & 46.4 [42.2, 50.5] & 39.2 \\
 & Linear-AcT                     & 0.8 [0.4, 1.4]  & 50.8 [47.3, 54.4]          & 51.4 [47.6, 55.4] & 34.3 \\
 & RE-Control                     & 0.2 [0.0, 0.6]  & 33.2 [29.2, 37.2]          & 4.8 [3.0, 6.8]    & 12.7 \\
 & CAA                            & 0.5 [0.1, 1.1]  & 58.7 [54.9, 62.3]          & 35.5 [32.1, 39.0] & 31.6 \\
 & RepE                           & 5.5 [4.5, 6.5]  & 54.4 [51.1, 57.5]          & \textbf{53.8 [50.1, 57.4]} & 37.9 \\
\midrule
\multirow{7}{*}{\rotatebox[origin=c]{90}{Gemma-4-E2B}}
 & First-Order (HeRD-Merging)$^\dagger$ & 0.5 [0.1, 1.1]  & \textbf{60.1 [56.4, 63.8]} & 35.4 [32.4, 38.7] & \textbf{32.0} \\
 & First-Order (Task Arithmetic)$^\dagger$    & 0.5 [0.1, 1.0]  & 59.9 [56.4, 63.6]          & 35.1 [31.9, 38.5] & 31.8 \\
\cmidrule(l){2-6}
 & ODESteer                       & 0.3 [0.0, 0.8]  & 21.1 [19.7, 22.5]          & 42.5 [38.6, 46.6] & 21.3 \\
 & Linear-AcT                     & 1.0 [0.4, 1.6]  & 57.1 [53.4, 60.9]          & \textbf{35.9 [32.7, 39.2]} & 31.3 \\
 & RE-Control                     & 0.8 [0.2, 1.6]  & 34.0 [29.8, 38.2]          & 9.0 [6.6, 11.6]   & 14.6 \\
 & CAA                            & 0.5 [0.1, 1.1]  & 44.0 [40.9, 46.8]          & 34.1 [30.9, 37.2] & 26.2 \\
 & RepE                           & \textbf{2.0 [1.2, 2.9]}  & 44.7 [40.8, 48.5] & 23.5 [20.8, 26.2] & 23.4 \\
\bottomrule
\end{tabular}
\label{table:activation_steering_adherence}
\end{table*}

\subsection{HeRD-Merging Comparisons}
\label{section:merging-comparisons}

We additionally compare HeRD-Merging against task arithmetic \citep{ilharco2023editingmodelstaskarithmetic}, DARE with task arithmetic for merging \citep{yu2024language}, DARE with TIES for merging \citep{yadav2023tiesmerging}, and KnOTS with TIES for merging \citep{stoica2024modelmergingsvdtie}. DARE randomly sparsifies each LoRA adapter with a drop rate $p = 0.9$. KnOTS performs SVD on the entire set of adapters to find an orthogonal basis for the weight updates. Merging through TIES is done by retaining only the parameters in each adapter which are in the top $20\%$ by absolute relative magnitude, electing the sign to be the sign of the total parameter mass, and merging with only the parameters that match this elected sign. For HeRD-Merging, we train HeRD adapters at all projection heads in all layers using the same hyperparameters as \Cref{section:steering-results}, with the evaluation and confidence intervals computations performed identically. To ensure a fair comparison, all hyperparameters are set to the default, best-performing configuration described by the main paper for each method. 

As shown in \Cref{table:overall-weight-merging-adherence}, HeRD-Merging achieves the highest overall behavioral adherence, at $79.9\%$, compared with $78.5\%$ for DARE task arithmetic, the strongest baseline, and $78.2\%$ for standard task arithmetic. This corresponds to improvements of $1.4$ and $1.7$ percentage points, respectively. HeRD-Merging also attains the highest aggregate adherence for each target behavior: $74.8\%$ for analogy use, $94.4\%$ for bulletpoint use, and $70.6\%$ for sophisticated language. Relative to the strongest baseline for each behavior, these scores improve upon task arithmetic by $0.6$ percentage points for analogy use, task arithmetic and DARE-TIES by $0.1$ percentage points for bulletpoint use, and DARE-TIES by $1.5$ percentage points for sophisticated language. These relatively small margins indicate that, in addition to improving first-order steering performance, HeRD-Merging maintains performance comparable to established model-merging procedures. We provide more detailed per-model results in \Cref{table:weight_merging_adherence}.

\FloatBarrier

\begin{table}[!ht]
\centering
\caption{Overall behavioral adherence (\%) of model merging methods, averaged across all three models using the same evaluation method as in \Cref{table:overall-activation-steering-adherence}. Bold indicates the best result per column.}
\footnotesize
\renewcommand{\arraystretch}{1.17}
\setlength{\tabcolsep}{4pt}
\begin{tabular}{@{}lcccc@{}}
\toprule
\textbf{Model Merging Method} & \textbf{Analogy Use} & \textbf{Bulletpoint Use} & \shortstack{\textbf{Sophisticated}\\\textbf{Language}} & \textbf{Average} \\
\midrule
HeRD-Merging (Ours)  & \textbf{74.8} & \textbf{94.4} & \textbf{70.6} & \textbf{79.9} \\
\midrule
DARE task arithmetic & 73.5 & 93.3 & 68.7 & 78.5 \\
Task arithmetic      & 74.2 & 94.3 & 66.1 & 78.2 \\
DARE-TIES            & 70.0 & 94.3 & 69.1 & 77.8 \\
KnOTS-TIES           & 51.2 & 90.5 & 52.2 & 64.6 \\
TIES                 & 51.4 & 76.1 & 55.1 & 60.9 \\
\bottomrule
\end{tabular}
\label{table:overall-weight-merging-adherence}
\end{table}

\FloatBarrier

\section{Conclusion}

We propose first-order steering, a reformulation of activation steering as a first-order approximation of model weight adaptations. We devise HeRD-Merging to merge multiple models while controlling the Hessian matrix norm of each transformer layer, which we show bounds the steering approximation error. First-order steering thus provides a quantitative indicator of performance that existing steering methods lack. In experiments, first-order steering improves overall accuracy and consistency across model families compared to baselines on behavioral adaptation tasks. HeRD-Merging performs similarly to established model merging methods while improving the resulting first-order steering performance. A practical limitation of first-order steering is the compute and data requirements of model finetuning. Future work could explore whether low-data finetuning techniques can produce comparable results. Notwithstanding, first-order steering also offers potential applications in mechanistic interpretability. For instance, exploring relationships between the Jacobian matrix $J_\vs \vh_\ell$ and the influence of a system prompt could reveal insights into model robustness and behavior prediction. Finally, the effectiveness of HeRD-Merging for Hessian regularization depends on the supplied weight update matrices. Future work could explore whether incorporating Hessian regularization into model training improves merging and steering performance.

\section*{Ethics Statement}

\ificlrpreprint\else This research adheres to the ICLR Code of Ethics. \fi
We do not use human subjects or user-sensitive data in our experiments. All data and models used for our experiments are publicly available. The aim of this work is to aid in the construction of aligned and controllable large language models. Our experiments demonstrate that our methods can cheaply induce various benign model behaviors. However, we acknowledge that these techniques are dual-use, and we promote responsible deployment in safety-critical environments. 

\section*{Reproducibility Statement}

\newcommand{\publiccontent}{%
  The results of our research can be reproduced using the code available in our \href{https://github.com/sripkunda/first-order-steering}{GitHub repository}. Our synthetically generated datasets are available at this \href{https://huggingface.co/collections/ChapAF/first-order-steering-datasets}{Hugging Face repository}. The checkpoints for the models we have finetuned on our synthetic datasets are available at this \href{https://huggingface.co/collections/ChapAF/first-order-steering-models}{Hugging Face repository}. For our experiments, we use four NVIDIA A100 GPUs with 80 GB VRAM. Further details on experiments are provided in \Cref{section:full-experiment-details}.
}

\publiccontent

\section*{AI Use Disclosure Statement}

In this work, we used LLMs to generate synthetic datasets, provide feedback on research methodology, and implement portions of our methodology. We additionally used AI tools to search for relevant information and analyze existing literature. We did not use generative tools to help develop theoretical models or conceptual frameworks, formulate mathematical claims, provide critical ingredients for proving mathematical claims, assist in the writing of proofs, propose or refine hypotheses, design research methodology, assist with translation, clean or reformat data, support qualitative and thematic data analysis, or interpret results. We have manually revised all AI-assisted work. Programs generated by LLMs were reviewed manually for consistency with our methodology, and we used software tests to verify intended functionality. All LLM feedback was thoroughly and manually reviewed for inconsistencies and compatibility with our methods. We take responsibility for the final content of this work, including text, claims, and experimental results presented with the aid of LLMs.

\ificlrpreprint
\section*{Author Contributions}

\textbf{Sri Pranav Kunda:} Conceptualization, Methodology, Software, Validation, Formal analysis, Investigation, Data Curation, Writing - Original Draft, Writing - Review \& Editing. \textbf{Alexander Kurz:} Conceptualization, Formal analysis, Resources, Writing - Review \& Editing, Supervision. \textbf{Tomáš Dominik:} Conceptualization, Resources, Writing - Review \& Editing, Supervision, Project administration. \textbf{Uri Maoz:} Conceptualization, Resources, Writing - Review \& Editing, Supervision, Project administration, Funding acquisition. 
\fi

\bibliography{references}
\bibliographystyle{iclr2027_conference}

\newpage 

\appendix

\startcontents[appendices]
\printcontents[appendices]{}{1}{\section*{Appendix Contents}}

\section{Theoretical Details}

\subsection{Proofs of Theorems}

Here, we supply a proof of the theoretical result stated in \Cref{theorem:error-bound-hessian}. The argument leverages the mean value inequality, which we restate below. 

\begin{proposition}[\cite{hörmander2015analysis}]
Let $V$ be a Banach space with norm $\|\cdot\|_V$, let $I$ be open in $\mathbb{R}$, and let $f : I \to V.$ Then $$\|f(y) - f(x)\|_V \leq \|y - x\| \sup_{x \in I}\|J_\vx f\|_V$$ for any $x, y \in I$.
\end{proposition}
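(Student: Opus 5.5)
We read the statement with its implicit hypotheses made explicit: $f$ is differentiable on $I$, $J_\vx f(t) = f'(t) \in V$ denotes its derivative, and the closed segment between $x$ and $y$ lies in $I$ (automatic if $I$ is an interval, which is the case in the application to line segments $[\mathbf{0}, \vs]$). We also write $M = \sup_{t \in I}\|f'(t)\|_V$ and may assume $M < \infty$, since otherwise there is nothing to prove. The plan is to reduce to the scalar mean value theorem by means of a norming functional from the Hahn--Banach theorem.

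\textbf{Step 1 (norming functional).} Set $u = f(y) - f(x)$. If $u = 0$ we are done. Otherwise Hahn--Banach gives a continuous linear functional $\varphi : V \to \mathbb{R}$ with $\|\varphi\| = 1$ and $\varphi(u) = \|u\|_V$. If $V$ is complex, we take a complex norming functional and replace it by its real part, which has the same norm and the same value on $u$.

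\textbf{Step 2 (scalar reduction).} Define $g = \varphi \circ f$ on $I$. By linearity and continuity of $\varphi$, the difference quotients of $g$ are $\varphi$ applied to those of $f$. Hence $g$ is differentiable with $g'(t) = \varphi(f'(t))$, and so
\[
|g'(t)| \leq \|\varphi\|\,\|f'(t)\|_V \leq M .
\]

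\textbf{Step 3 (conclude).} Apply the real mean value theorem to $g$ on the segment between $x$ and $y$. This gives a point $\xi$ strictly between them with $g(y) - g(x) = g'(\xi)(y - x)$. Therefore
\[
\|f(y) - f(x)\|_V = \varphi(u) = g(y) - g(x) \leq |y - x|\,|g'(\xi)| \leq |y - x|\, M ,
\]
which is the claimed inequality.

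The only delicate points are the hypotheses, not the computation: we need the segment to lie in $I$ and $f$ to be differentiable there, both of which we assume. If we prefer to avoid Hahn--Banach, we would instead use a direct continuation argument. Fix $\varepsilon > 0$ and let $t^*$ be the supremum of those $t \in [x, y]$ with
\[
\|f(t) - f(x)\| \leq (M + \varepsilon)(t - x) + \varepsilon .
\]
Differentiability at $t^*$ then shows that the inequality holds slightly beyond $t^*$ unless $t^* = y$. Letting $\varepsilon \to 0$ gives the result. The main obstacle in that route is handling the closedness and openness of the set of good $t$ carefully, which is why the functional-analytic reduction is the route we would take first.
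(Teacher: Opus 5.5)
Your proof is correct. Once the implicit hypotheses are made explicit ($f$ differentiable on $I$, and the closed segment between $x$ and $y$ contained in $I$), the norming functional reduces the claim to the scalar mean value theorem. Each step checks out: $g'=\varphi\circ f'$, $|g'|\le M$, and $\|f(y)-f(x)\|_V=g(y)-g(x)$.

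The paper gives no argument to compare against. It quotes the inequality from H\"ormander without proof, uses it as a black box, and asserts that the extension to open rectangles follows ``with minimal effort.'' Your proposal therefore supplies what the paper omits. Your added hypotheses are necessary, not cosmetic: for a merely open $I$ the statement as printed is false. Take $I=(0,1)\cup(2,3)$ and let $f$ be $0$ on one component and a fixed nonzero vector on the other; then $f'\equiv 0$ but $f(y)\neq f(x)$.

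The segment condition is also exactly what the paper's extension needs. The paper invokes the inequality in the proof of Proposition~\ref{theorem:error-bound-hessian}, not for the Taylor remainder along $[\mathbf{0},\vs]$ that you mention. There one applies your result to $t\mapsto J_\vs\vh_\ell'(\vx_a+t(\vx_b-\vx_a),\mathbf{0})$ on $[0,1]$. Convexity of the open rectangle $S$ keeps the segment inside $S$. Cauchy--Schwarz then bounds the Frobenius norm of the derivative by $Q\|\vx_b-\vx_a\|$, which gives the paper's estimate.

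As for the two routes you describe, the Hahn--Banach reduction is the shortest. In the paper's setting it does not even need Hahn--Banach: $V$ is a space of matrices under the Frobenius inner product, so $\varphi(w)=\operatorname{tr}(u^\top w)/\|u\|_F$ is an explicit norming functional. The continuation argument avoids duality altogether and only uses one-sided derivatives. If you write it out, take $t^*$ to be the \emph{maximum} of the set of good $t$. That set is nonempty, and it is closed by continuity of $f$, so $t^*$ is itself good. Differentiability at $t^*$ then produces good points beyond $t^*$ unless $t^*=y$. This resolves the closedness issue you flagged.
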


With minimal effort, this result can be extended to an open rectangle $S$ and functions $f : \mathbb{R}^{d + n} \to V$ for $x, y \in S$. We now prove the theorem.

\begin{proof}[Proof of \Cref{theorem:error-bound-hessian}]
Let $g(\vx_a) = J_\vs \vh_\ell'(\vx_a,\mathbf{0})$ and let $\vx_a, \vx_b \in \mathcal{D}$. Since $S$ is an open rectangle, the mean value inequality gives $$\sum_{a,b} \|g(\vx_b) - g(\vx_a)\|^2_F \leq \sum_{a,b} Q^2\|\vx_b-\vx_a\|^2 \leq M^2Q^2L^2.$$
Let $$\mA^* = \frac{1}{M} \sum_{m=1}^M g(\vx_m).$$

Then we achieve the desired bound by
\begin{align}
\frac{1}{M} \sum_{m=1}^M \|g(\vx_m) - \mA^*\|^2_F &=
\frac{1}{2M^2} \sum_{a,b} \|g(\vx_b) - g(\vx_a)\|^2_F \\ &\le
\frac{Q^2L^2}{2}.
\end{align}
\end{proof}

\subsection{First-Order Steering Generalizes Task Arithmetic}
\label{section:first-order-steering-generalizes-ta}

In this section, we describe how we handle the equivalence of first-order and task-arithmetic steering vectors under the formulation of \Cref{equation:first-order-steering}. 

In the development of a procedure $\Delta \mW(\vs)$ for merging models given by weight updates $\Delta \mW_1, \dots, \Delta \mW_n,$ it is reasonable to expect that the partial derivative of $\Delta \mW(\vs)$ with respect to each component $s_k$ yields the original matrix $\Delta \mW_k$. In particular, we enforce 
\begin{equation}
\label{equation:merging-condition}
J_\vs \Delta \mW(\mathbf{0})\ve_k = \Delta \mW_k.
\end{equation} In such a procedure, the analytical derivation $J_\vs \vh_\ell(\vx, \mathbf{0})\, \vd$ is
\begin{align*}
J_\vs \vh_\ell(\vx, \mathbf{0}) \, \vd = \frac{d}{d\alpha} \vh_{\ell}(\vx, \mW_{p_1}(\alpha\vd), \dots, \mW_{p_q}(\alpha\vd))
\end{align*} for the relevant projection heads $p_1, \dots, p_q \in \mathcal{P}.$ Then the chain rule gives 
\begin{align}
\label{equation:jvp-formulation-of-jacobian}
J_\vs \vh_\ell(\vx, \mathbf{0}) \, \vd 
&= \sum_{r = 1}^q \frac{\partial \vh_\ell}{\partial \mW_{\ell, p_r}} (\vx, \mathbf{0}) [J_\vs \mW_{\ell, p_r} (\vs) \vd]
\\&= \sum_{r = 1}^q \frac{\partial \vh_\ell}{\partial \mW_{\ell, p_r}}(\vx, \mathbf{0}) \left[\sum_{k = 1}^n d_k \Delta \mW_{k, p_r}\right].
\end{align} Letting $\mW_\ell = (\mW_{\ell, p_1}, \dots, \mW_{\ell, p_q})$ and $\Delta \mW_{\ell, k} = (\Delta \mW_{k, p_1}, \dots, \Delta \mW_{k, p_q})$, we have \begin{equation}
\label{equation:first-order-steering-vector}
J_\vs \vh_\ell(\vx, \mathbf{0}) \vd = \frac{\partial \vh_\ell}{\partial \mW_\ell}(\vx, \mathbf{0}) \left[\sum_{k = 1}^n d_k \Delta \mW_{\ell, k}\right].
\end{equation} A slight generalization involves replacing $\vh_\ell$ with any block $\mF_i(\vx, \mW)$ that follows $\mW$ in the computational graph of the language model. In particular, any differentiable merging procedure which satisfies \Cref{equation:merging-condition} must contain task arithmetic as its first-order term, given by choosing $\mF_i = \mW$. This supports the naturality of task arithmetic as a rule for composing behaviors, in that behavioral updates at any $\mF_i$ can be viewed as applying the map $\frac{\partial \mF_i}{\partial \mW_{\ell}}(\vx, \mathbf{0})$ to task arithmetic. 

Importantly, \Cref{equation:first-order-steering-vector} motivates the generalization of \Cref{equation:first-order-steering} to \begin{equation}
\label{equation:generalized-first-order-steering}
\vh_{\ell}'(\vx, \vs) \approx \vh_{\ell}(\vx) + J_{\mW_\ell} \vh_{\ell}' (\vx, \mathbf{0}) [\Delta \mW(\vs)].
\end{equation}

By characterizing $J_{\mW_\ell} \vh_{\ell}' (\vx, \mathbf{0}) [\Delta \mW(\vs)]$ as a function of $\vs$, the same method of calculating steering vectors presented in \Cref{section:fo-steering} can be applied. Moreover, by \Cref{equation:first-order-steering-vector}, task arithmetic provides the same first-order steering rule as \Cref{equation:first-order-steering}. When used with HeRD-merging, this generalization allows for incorporating the matrices $\mR_S$. In the case of HeRD-Merging, composition is also preserved through addition of steering vectors for each target behavior and composition factor $\prod_{t \in S} s_t.$

\section{Experiment Details}
\label{section:full-experiment-details}

In this section, we provide further details for the experiments described in \Cref{section:experiments}.

\subsection{Synthetic Dataset Generation}
\label{section:synthetic-dataset-generation}
The generation of synthetic data was done separately for each of the three target behaviors using GPT-5.5. The sampling parameters are kept as the default from OpenAI (temperature $\tau = 1.0$ and top-$p$ with $p = 1.0$). Prompts from the UltraChat dataset are given to the model with a system prompt specifying the desired transformation. The original model responses are not provided to the model, and the model can only view the ongoing conversation history given by each row of the dataset. Each dataset transforms the same set of prompts, matched using the prompt IDs in UltraChat. 

\subsection{Classifier Training}
\label{section:classifier-information}

For training classifiers, we use the pretrained checkpoint available at \href{microsoft/deberta-v3-base}{this Hugging Face repository}. The model contains $12$ layers, hidden dimension $768$, and vocabulary size $128000$, amounting to $184$ million total parameters. We finetuned the base model to produce three variants corresponding to each target behavior. Training was done on the whole dataset of approximately $10000$ samples over $2$ epochs with batch size $128$ using the AdamW optimizer defaults ($\beta_1 = 0.9, \beta_2 = 0.999, \epsilon = 10^{-8}, \eta = 2 \times 10^{-5}$). We additionally schedule the learning rate $\eta$ to decay linearly with a $10\%$ warmup spanning $25$ steps. To prevent overfitting, we use a weight decay of $0.01$. The classification results are available in \Cref{table:deberta-results}. To further ensure that the classifiers do not mislabel the results of experiments, we manually evaluate a subset of all experiment data and verify agreement with classifier labels.

\begin{table}[h]
\centering
\caption{Finetuned DeBERTa-v3-base behavior classifier performance. Thresholds are selected on the validation set to maximize validation accuracy and are used for all evaluations.}
\small
\setlength{\tabcolsep}{5pt}
\begin{tabular}{lccccc}
\toprule
\textbf{Axis} & \textbf{Threshold} & \textbf{Test AUROC} & \textbf{Test AUPRC} & \textbf{Test Acc (\%)} & \textbf{Test $F_1$} \\
\midrule
Analogy Use                & 0.85 & 1.00 & 1.00 & 99.95 & 1.00 \\
Bulletpoint Use         & 0.12 & 1.00 & 1.00 & 99.85 & 1.00 \\
Sophisticated Language & 0.20 & 1.00 & 1.00 & 99.85 & 1.00 \\
\bottomrule
\end{tabular}
\label{table:deberta-results}
\end{table}

\subsection{LLM Generation Settings}

Generations for all evaluations across all methods were performed with no system prompt using greedy decoding for reproducibility. We include the Hugging Face repository for each model below.

\begin{enumerate}
  \item{\textbf{Qwen2.5-VL-3B:} \texttt{Qwen/Qwen2.5-VL-3B-Instruct}}
  \item{\textbf{Gemma-4-E2B:} \texttt{google/gemma-4-E2B}}
  \item{\textbf{Mistral-7B-Instruct-v0.3:} \texttt{mistralai/Mistral-7B-Instruct-v0.3}}
\end{enumerate}

We also note that the choice of these models supplies a range of parameter counts in addition to model architectures. Qwen2.5-VL-3B has 3 billion parameters, Gemma-4-E2B is a per-layer embeddings model with 5 billion total parameters, and Mistral-7B-Instruct-v0.3 has 7 billion parameters.

The computation of bootstrap confidence intervals for generations uses the nonparametric bootstrapping method, but with the modification that prompts, rather than individual generations, are resampled. This preserves the dependence among generations derived from the same prompt. For each bootstrap replicate, we sample $N$ prompts with replacement from the $N$ evaluation prompts, include every associated generation for each selected prompt, and record the classifier outputs. We repeat this procedure $B = 2000$ times and report the empirical $2.5$th and $97.5$th percentiles as a $95\%$ confidence interval for each reported statistic.

\section{Additional Results}

In this section we report supplementary results for first-order steering and HeRD-Merging.

\subsection{HeRD-Merging Per-Model Results}

We provide the results of HeRD-Merging for each model to supplement the overall results presented in \Cref{section:merging-comparisons}. HeRD-Merging demonstrates competitive performance with current merging baselines (\Cref{table:weight_merging_adherence}). On Qwen, HeRD-Merging reaches $92.4\%$ mean adherence, exceeding the strongest baseline by $2.2$ percentage points. Similarly, HeRD-Merging reaches $78.9\%$ on Mistral, exceeding the strong baseline, task arithmetic, which scores $76.7\%$. On Gemma, HeRD-Merging matches task arithmetic at $68.5\%$ mean adherence, while DARE-TIES leads at $71.9\%$ and exceeds HeRD-Merging on all three behaviors. This suggests that although performance improvements are displayed on Qwen and Mistral, HeRD-Merging is not a universally superior merging procedure in comparison to existing methods \footnote{The core aim of this work is in improving steering performance, not merging performance. We report these results merely to demonstrate that using HeRD-Merging results in merging performance similar to existing procedures.}. Nevertheless, the results support HeRD-Merging as a competitive weight-merging procedure that does not dramatically sacrifice merging performance for Hessian regularization.

\begin{table*}[h]
\centering
\caption{Behavioral adherence (\%) of model merging methods with 95\% confidence intervals, computed across all combinations of target behaviors and reported individually for each. Bold indicates the best result per column within each model block.}
\footnotesize
\renewcommand{\arraystretch}{1.17}
\setlength{\tabcolsep}{4pt}
\begin{tabular}{@{}llcccc@{}}
\toprule
\textbf{Model} & \textbf{Model Merging Method} & \textbf{Analogy Use} & \textbf{Bulletpoint Use} & \shortstack{\textbf{Sophisticated}\\\textbf{Language}} & \textbf{Average} \\
\midrule
\multirow{6}{*}{\rotatebox[origin=c]{90}{Qwen2.5-VL-3B}}
 & HeRD-Merging (Ours)  & \textbf{86.8 [82.5, 90.5]} & 96.2 [94.2, 98.0] & \textbf{94.2 [91.0, 97.0]} & \textbf{92.4} \\
\cmidrule(l){2-6}
 & DARE task arithmetic & 81.0 [76.2, 85.5] & \textbf{99.0 [97.0, 100.0]} & 90.5 [86.5, 94.0] & 90.2 \\
 & Task arithmetic      & 82.8 [78.2, 87.0] & 98.5 [96.0, 100.0]          & 87.0 [82.0, 91.5] & 89.4 \\
 & DARE-TIES            & 73.5 [67.8, 79.0] & 98.5 [96.0, 100.0]          & 87.8 [83.8, 91.5] & 86.6 \\
 & KnOTS-TIES           & 44.0 [37.2, 50.7] & 98.2 [95.8, 99.8]           & 63.7 [58.2, 69.0] & 68.6 \\
 & TIES                 & 43.0 [36.0, 49.8] & 84.2 [80.0, 88.2]           & 54.2 [49.2, 59.5] & 60.5 \\
\midrule
\multirow{6}{*}{\rotatebox[origin=c]{90}{\shortstack{Mistral-7B \\ Instruct-v0.3}}}
 & HeRD-Merging (Ours)  & 76.0 [71.0, 81.2] & \textbf{93.8 [91.0, 96.2]} & \textbf{66.8 [60.5, 72.5]} & \textbf{78.9} \\
\cmidrule(l){2-6}
 & DARE task arithmetic & \textbf{77.5 [73.0, 82.2]} & 90.0 [86.8, 93.0] & 60.8 [54.5, 66.8] & 76.1 \\
 & Task arithmetic      & \textbf{77.5 [72.8, 82.2]} & 91.2 [88.2, 94.0] & 61.3 [55.0, 67.5] & 76.7 \\
 & DARE-TIES            & 72.2 [68.0, 76.2] & 89.5 [85.8, 92.5]          & 63.0 [57.0, 69.2] & 74.9 \\
 & KnOTS-TIES           & 65.5 [61.0, 69.8] & 77.5 [73.0, 81.8]          & 43.5 [39.0, 48.0] & 62.2 \\
 & TIES                 & 62.5 [57.8, 67.2] & 68.8 [63.0, 74.2]          & 54.2 [48.2, 60.5] & 61.8 \\
\midrule
\multirow{6}{*}{\rotatebox[origin=c]{90}{Gemma-4-E2B}}
 & HeRD-Merging (Ours)  & 61.5 [57.2, 65.8] & 93.2 [90.0, 96.0]          & 50.7 [45.8, 55.8] & 68.5 \\
\cmidrule(l){2-6}
 & DARE task arithmetic & 62.0 [57.8, 66.0] & 91.0 [87.8, 94.0]          & 54.8 [48.5, 61.0] & 69.3 \\
 & Task arithmetic      & 62.3 [58.0, 66.5] & 93.2 [89.5, 96.2]          & 50.0 [45.2, 55.0] & 68.5 \\
 & DARE-TIES            & \textbf{64.2 [60.2, 68.2]} & 95.0 [92.5, 97.2] & 56.5 [50.2, 62.8] & \textbf{71.9} \\
 & KnOTS-TIES           & 44.0 [40.8, 47.2] & \textbf{95.8 [92.5, 98.2]} & 49.5 [42.8, 56.5] & 63.1 \\
 & TIES                 & 48.8 [44.0, 53.2] & 75.2 [69.2, 80.8]          & \textbf{56.8 [50.2, 63.2]} & 60.3 \\
\bottomrule
\end{tabular}
\label{table:weight_merging_adherence}
\end{table*}

\subsection{First-Order Steering at All Layers}
\label{section:all-layer-experiments}

The evaluation of first-order steering against the remaining steering baselines utilized only a single layer for steering to facilitate a fair comparison between all methods. However, first-order steering supports a clean formulation for computing steering vectors for all layers simultaneously without encountering output degradation. The procedure for doing this remains consistent with \Cref{section:steering-results}, with the difference being that the LoRA adapters used are the ones trained at all layers for use in \Cref{section:merging-comparisons}.

HeRD-Merging retains higher mean adherence than task arithmetic when steering at all layers, but the benefit of all-layer steering itself depends on the model (\Cref{table:single_vs_all_layer_steering}). Relative to single-layer steering, mean adherence with HeRD-Merging increases from $64.7\%$ to $81.2\%$ on Qwen and from $32.0\%$ to $34.3\%$ on Gemma, but decreases from $42.5\%$ to $35.4\%$ on Mistral. Thus, applying steering at more layers does not uniformly improve behavioral control.

To characterize how steering-vector contributions are distributed across layers, we define the \emph{effective layer count} for target behavior $i$ as
\begin{equation}
P_i = \frac{1}{\sum_\ell p_{i,\ell}^2},
\qquad
p_{i,\ell} = \frac{\|\mA_\ell^*\ve_i\|_2^2}{\sum_r \|\mA_r^*\ve_i\|_2^2},
\end{equation}
assuming that at least one of the steering vectors is nonzero. This definition has the property that if the summed steering vector norm is distributed equally over $K$ layers and is zero elsewhere, then $P_i = K$. As shown in \Cref{table:steering-effective-layer-count}, Qwen has the lowest effective counts ($5.5$--$8.6$, mean $7.4$), Mistral is intermediate ($9.9$--$12.9$, mean $11.1$), and Gemma has the highest counts ($19.9$--$20.8$, mean $20.3$). Within Qwen, sophisticated-language steering is the most concentrated, whereas Gemma exhibits similarly distributed energy across all three behaviors. We remark that steering at all layers simultaneously is underexplored due to the pre-existing problem of representational collapse \citep{braun2025understanding, tan2024analysing, nelwan2026deployableperinstancemultilayeractivation}. However, we encourage future work to investigate the relationship between the steering vector norms produced from all-layer steering and the steerability of particular models or model layers.

\begin{table*}[t]
\centering
\caption{
Behavioral adherence (\%) of first-order steering at all layers with 95\% confidence intervals, computed across all combinations of target behaviors and reported individually for each. Bold indicates the best result per column within each model block.
}
\footnotesize

\setlength{\tabcolsep}{4pt}
\renewcommand{\arraystretch}{1.25}

\begin{tabular}{@{}llcccc@{}}
\toprule
\textbf{Model} & \textbf{Steering Method / Layers}
& \textbf{Analogy Use}
& \textbf{Bulletpoint Use}
& \shortstack{\textbf{Sophisticated}\\\textbf{Language}}
& \textbf{Average} \\
\midrule
\multirow{4}{*}{\rotatebox[origin=c]{90}{\shortstack{Qwen2.5-VL\\3B}}}
 & HeRD-Merging (All Layers)
 & 66.0 [62.0, 69.8]
 & \textbf{89.8} [86.0, 93.2]
 & \textbf{87.8} [84.2, 90.8]
 & \textbf{81.2} \\
& Task Arithmetic (All Layers)
 & \textbf{69.8} [65.5, 74.0]
 & 89.2 [84.5, 93.3]
 & 70.2 [66.0, 74.8]
 & 76.4 \\
 & HeRD-Merging (Single Layer)
 & 42.2 [36.8, 48.0]
 & 74.2 [68.2, 80.0]
 & 77.8 [73.0, 82.5]
 & 64.7 \\
 & Task Arithmetic (Single Layer)
 & 43.0 [38.2, 47.8]
 & 67.0 [61.0, 73.0]
 & 56.2 [49.5, 63.2]
 & 55.4 \\
\midrule

\multirow{4}{*}{\rotatebox[origin=c]{90}{\shortstack{Mistral-7B\\Instruct-v0.3}}}
 & HeRD-Merging (All Layers)
 & 0.3 [0.0, 0.9]
 & \textbf{64.7} [55.9, 72.8]
 & 41.2 [33.1, 50.3]
 & 35.4 \\
 & Task Arithmetic (All Layers)
 & 0.3 [0.0, 0.9]
 & 64.1 [54.7, 73.1]
 & 33.4 [27.5, 39.7]
 & 32.6 \\
 & HeRD-Merging (Single Layer)
 & 16.6 [15.3, 17.8]
 & 63.5 [60.7, 66.5]
 & \textbf{47.5} [44.3, 50.8]
 & \textbf{42.5} \\
 & Task Arithmetic (Single Layer)
 & \textbf{17.1} [15.8, 18.3]
 & 63.7 [61.0, 66.5]
 & 41.3 [38.7, 44.0]
 & 40.7 \\

\midrule

\multirow{4}{*}{\rotatebox[origin=c]{90}{\shortstack{Gemma-4\\E2B}}}
 & HeRD-Merging (All Layers)
 & 0.3 [0.0, 0.9]
 & 56.9 [47.8, 65.6]
 & \textbf{45.6} [37.8, 54.1]
 & \textbf{34.3} \\
 & Task Arithmetic (All Layers)
 & 0.3 [0.0, 0.9]
 & 55.9 [46.6, 64.7]
 & 45.0 [37.5, 53.1]
 & 33.7 \\
 & HeRD-Merging (Single Layer)
 & \textbf{0.5} [0.1, 1.1]
 & \textbf{60.1} [56.4, 63.8]
 & 35.4 [32.4, 38.7]
 & 32.0 \\
 & Task Arithmetic (Single Layer)
 & \textbf{0.5} [0.1, 1.0]
 & 59.9 [56.4, 63.6]
 & 35.1 [31.9, 38.5]
 & 31.8 \\
\bottomrule
\end{tabular}
\label{table:single_vs_all_layer_steering}
\end{table*}

\begin{table}[t]
\centering
\caption{Effective layer count for each behavioral steering vector.}
\small
\setlength{\tabcolsep}{6pt}
\begin{tabular}{lcccc}
\toprule
\textbf{Model} &
\textbf{Analogy} &
\textbf{Bulletpoint Use} &
\textbf{Sophisticated Language} &
\textbf{Mean} \\
\midrule
Qwen2.5-VL-3B & 8.6 & 8.0 & 5.5 & 7.4 \\
Mistral-7B & 9.9 & 12.9 & 10.6 & 11.1 \\
Gemma-4-E2B & 19.9 & 20.0 & 20.8 & 20.3 \\
\bottomrule
\end{tabular}
\label{table:steering-effective-layer-count}
\end{table}

\begin{figure}[H]
    \centering
    \includegraphics[width=\textwidth]{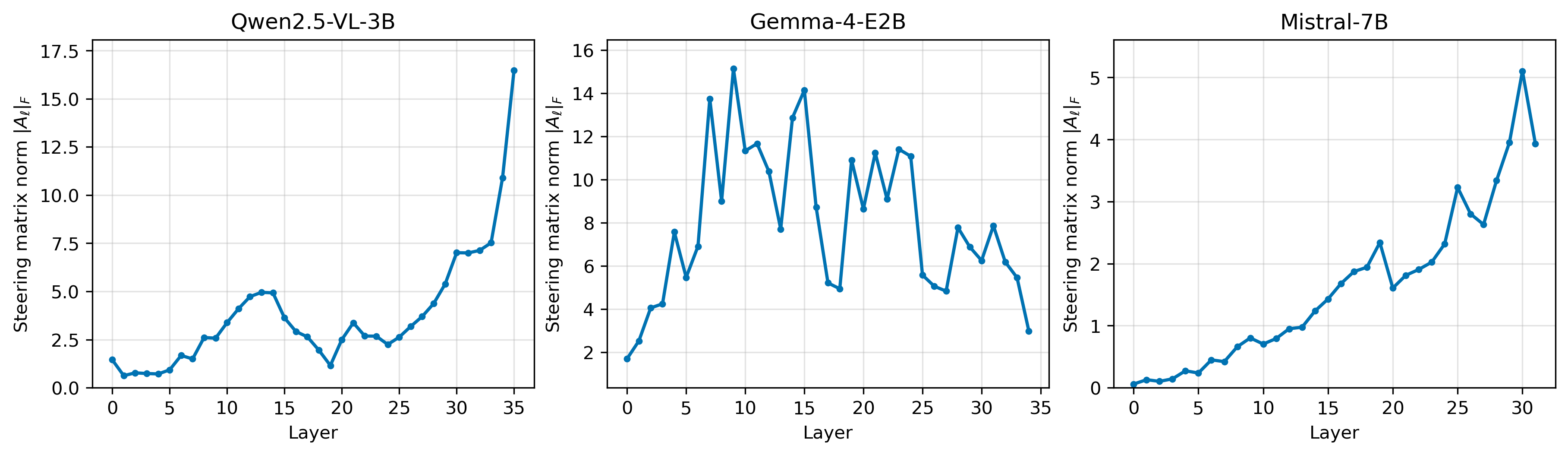}
    \caption{Frobenius norm of the steering matrix at each layer.}
    \label{fig:steering_matrix_frobenius_norm}
\end{figure}

\noindent\begin{minipage}{\linewidth}
\subsection{Synthetic Data Examples}
\label{section:dataset-examples}

\begin{table}[H]
\centering
\footnotesize
\setlength{\tabcolsep}{3pt}
\begin{tabularx}{\textwidth}{@{}>{\raggedright\arraybackslash}p{2.6cm}LL@{}}
\toprule
\multicolumn{3}{@{}l@{}}{\textbf{Target Behavior:} Analogy Use} \\
\midrule
\textbf{Prompt} & \textbf{Original Answer} & \textbf{New Answer} \\
\midrule

Write an analytical essay on metaphorical language in nature-themed writing.
& Nature-themed writing often employs metaphorical language to describe the natural environment in a way that is both imaginative and sensory. Metaphors serve as a tool for writers to create vivid images that allow readers to connect with the natural world beyond the physical senses. In this essay, we will analyze the use of metaphorical language in nature-themed writing and how it contributes to the reader's \ldots{}
& Imagine a vast garden where each plant represents a different element of writing about nature, and metaphorical language is the vine that weaves through this garden, connecting and enhancing every flower and tree. Just as vines wrap around trees, giving them added texture and depth, metaphorical language encircles simple descriptions of nature, transforming them into rich tapestries of meaning. \newline In this garden, \ldots{} \\
\midrule

Please provide a step-by-step guide on how to make a traditional beef stroganoff, including ingredients, cooking methods, and serving suggestions.
& Ingredients: \newline \textbullet{} 1 lb top sirloin beef, sliced thinly \newline \textbullet{} 1 onion, sliced thin \newline \textbullet{} 2 cloves of garlic, minced \newline \textbullet{} 8 oz cremini mushrooms, sliced \newline \textbullet{} 1-2 tbsp all-purpose flour \newline \textbullet{} 1/2 cup beef broth \newline \textbullet{} 1/2 cup sour cream \newline \textbullet{} 2 tbsp butter \newline \textbullet{} Salt and pepper to taste \newline For the noodles: \newline \textbullet{} 8 oz egg noodles \newline \textbullet{} Salt \newline Instructions: \newline 1. Cook the egg noodles according to the package instructions with a generous pinch of salt. Drain and set \ldots{}
& Making traditional beef stroganoff is like preparing a classic orchestra performance, where each musician (ingredient) and section (step) must come in harmony for a beautiful symphony (delicious dish). \newline First, gather your orchestra members: tender beef strips (the solo violinist), mushrooms, onions, garlic (the woodwind section), sour cream, beef broth (the brass section), flour (the percussion), butter, oil, salt, \ldots{} \\
\midrule

Write a Tcl script that generates a random password with a length of 8 characters, comprising of uppercase letters, lowercase letters, and digits. The generated password should be displayed on the console.
& \texttt{Set password ""} \newline \texttt{set possible\_chars "abcdefghijklmnopqrstuvw\allowbreak{}xyzABCDEFGHIJKLMNOPQRSTU\allowbreak{}VWXYZ1234567890"} \newline \texttt{for \{set I 0\} \{\$i \textless{} 8\} \{incr i\} \{} \newline \texttt{\ \ set rand\_index [expr \{int(rand() * [string length \$possible\_chars])\}]} \newline \texttt{\ \ append password [string index \$possible\_chars \$rand\_index]} \newline \texttt{\}} \newline \texttt{puts "Your random password is: \$password"}
& Imagine you're in a kitchen, and you want to make a sandwich that's exactly 8 layers thick. Each layer can be a slice of bread (uppercase letter), a slice of cheese (lowercase letter), or a slice of tomato (digit). To make the sandwich, you randomly grab one type of ingredient from the bowl each time until you have all 8 layers stacked. Once done, you place the sandwich on the counter for everyone to see. \newline So, the \ldots{} \\
\bottomrule

\end{tabularx}

\end{table}
\end{minipage}\par

\begin{table}[H]
\centering
\footnotesize
\setlength{\tabcolsep}{3pt}
\begin{tabularx}{\textwidth}{@{}>{\raggedright\arraybackslash}p{2.6cm}LL@{}}
\toprule
\multicolumn{3}{@{}l@{}}{\textbf{Target Behavior:} Bullet Points} \\
\midrule
\textbf{Prompt} & \textbf{Original Answer} & \textbf{New Answer} \\
\midrule

Write an analytical essay on metaphorical language in nature-themed writing.
& Nature-themed writing often employs metaphorical language to describe the natural environment in a way that is both imaginative and sensory. Metaphors serve as a tool for writers to create vivid images that allow readers to connect with the natural world beyond the physical senses. In this essay, we will analyze the use of metaphorical language in nature-themed writing and how it contributes to the reader's \ldots{}
& \textbullet{} Metaphorical language in nature-themed writing serves as a powerful tool to convey complex emotions and abstract concepts through relatable and vivid imagery.   \newline \textbullet{} It allows writers to bridge the gap between human experience and the natural world, making the unfamiliar familiar and the intangible tangible.   \newline \textbullet{} Nature metaphors often draw on familiar elements like seasons, animals, landscapes, and weather phenomena \ldots{} \\
\midrule

Please provide a step-by-step guide on how to make a traditional beef stroganoff, including ingredients, cooking methods, and serving suggestions.
& Ingredients: \newline \textbullet{} 1 lb top sirloin beef, sliced thinly \newline \textbullet{} 1 onion, sliced thin \newline \textbullet{} 2 cloves of garlic, minced \newline \textbullet{} 8 oz cremini mushrooms, sliced \newline \textbullet{} 1-2 tbsp all-purpose flour \newline \textbullet{} 1/2 cup beef broth \newline \textbullet{} 1/2 cup sour cream \newline \textbullet{} 2 tbsp butter \newline \textbullet{} Salt and pepper to taste \newline For the noodles: \newline \textbullet{} 8 oz egg noodles \newline \textbullet{} Salt \newline Instructions: \newline 1. Cook the egg noodles according to the package instructions with a generous pinch of salt. Drain and set \ldots{}
& \textbullet{} Ingredients: \newline \quad\textbullet{} 1 lb (450g) beef sirloin or tenderloin, thinly sliced into strips \newline \quad\textbullet{} 1 medium onion, finely chopped \newline \quad\textbullet{} 2 cloves garlic, minced \newline \quad\textbullet{} 8 oz (225g) mushrooms, sliced \newline \quad\textbullet{} 1 cup beef broth \newline \quad\textbullet{} 1 cup sour cream \newline \quad\textbullet{} 2 tablespoons all-purpose flour \newline \quad\textbullet{} 2 tablespoons butter \newline \quad\textbullet{} 2 tablespoons vegetable oil \newline \quad\textbullet{} 1 tablespoon Dijon mustard (optional) \newline \quad\textbullet{} Salt and black pepper to taste \newline \quad\textbullet{} Fresh parsley, \ldots{} \\
\midrule

Write a Tcl script that generates a random password with a length of 8 characters, comprising of uppercase letters, lowercase letters, and digits. The generated password should be displayed on the console.
& \texttt{Set password ""} \newline \texttt{set possible\_chars "abcdefghijklmnopqrstuvw\allowbreak{}xyzABCDEFGHIJKLMNOPQRSTU\allowbreak{}VWXYZ1234567890"} \newline \texttt{for \{set I 0\} \{\$i \textless{} 8\} \{incr i\} \{} \newline \texttt{\ \ set rand\_index [expr \{int(rand() * [string length \$possible\_chars])\}]} \newline \texttt{\ \ append password [string index \$possible\_chars \$rand\_index]} \newline \texttt{\}} \newline \texttt{puts "Your random password is: \$password"}
& \textbullet{} \texttt{\#!/usr/bin/env tclsh} \newline \textbullet{} \texttt{\# Define character sets} \newline \textbullet{} \texttt{set uppercase "ABCDEFGHIJKLMNOPQRSTUVW\allowbreak{}XYZ"} \newline \textbullet{} \texttt{set lowercase "abcdefghijklmnopqrstuvw\allowbreak{}xyz"} \newline \textbullet{} \texttt{set digits "0123456789"} \newline \textbullet{} \texttt{\# Combine all characters} \newline \textbullet{} \texttt{set all\_chars "\$\{uppercase\}\$\{lowercase\allowbreak{}\}\$\{digits\}"} \newline \textbullet{} \texttt{\# Set password length} \newline \textbullet{} \texttt{set password\_length 8} \newline \textbullet{} \texttt{\# Initialize empty password} \newline \textbullet{} \texttt{set password ""} \newline \textbullet{} \texttt{\# Seed random number generator with current time} \ldots{} \\
\bottomrule

\end{tabularx}

\end{table}

\begin{table}[H]
\centering
\footnotesize
\setlength{\tabcolsep}{3pt}
\begin{tabularx}{\textwidth}{@{}>{\raggedright\arraybackslash}p{2.6cm}LL@{}}
\toprule
\multicolumn{3}{@{}l@{}}{\textbf{Target Behavior:} Sophisticated Language} \\
\midrule
\textbf{Prompt} & \textbf{Original Answer} & \textbf{New Answer} \\
\midrule

Write an analytical essay on metaphorical language in nature-themed writing.
& Nature-themed writing often employs metaphorical language to describe the natural environment in a way that is both imaginative and sensory. Metaphors serve as a tool for writers to create vivid images that allow readers to connect with the natural world beyond the physical senses. In this essay, we will analyze the use of metaphorical language in nature-themed writing and how it contributes to the reader's \ldots{}
& In the labyrinthine corridors of literary exegesis, the examination of metaphorical language within the ambit of nature-themed writing emerges as an intellectually fecund endeavor, replete with intricate interplays of semiotics and ontological resonances. The metaphor, that quintessentially abstruse rhetorical device, functions not merely as an ornamental linguistic flourish but as an epistemic conduit through which \ldots{} \\
\midrule

Please provide a step-by-step guide on how to make a traditional beef stroganoff, including ingredients, cooking methods, and serving suggestions.
& Ingredients: \newline \textbullet{} 1 lb top sirloin beef, sliced thinly \newline \textbullet{} 1 onion, sliced thin \newline \textbullet{} 2 cloves of garlic, minced \newline \textbullet{} 8 oz cremini mushrooms, sliced \newline \textbullet{} 1-2 tbsp all-purpose flour \newline \textbullet{} 1/2 cup beef broth \newline \textbullet{} 1/2 cup sour cream \newline \textbullet{} 2 tbsp butter \newline \textbullet{} Salt and pepper to taste \newline For the noodles: \newline \textbullet{} 8 oz egg noodles \newline \textbullet{} Salt \newline Instructions: \newline 1. Cook the egg noodles according to the package instructions with a generous pinch of salt. Drain and set \ldots{}
& Ah, the culinary undertaking of preparing a traditional Beef Stroganoff---a gastronomic symphony that demands both precision and reverence for its Russo-European origins. Permit me to elucidate, in a manner befitting the grandeur of this dish, the sequential methodology, inclusive of requisite victuals, intricate culinary manipulations, and the most perspicacious recommendations for its consummate presentation. \ldots{} \\
\midrule

Write a Tcl script that generates a random password with a length of 8 characters, comprising of uppercase letters, lowercase letters, and digits. The generated password should be displayed on the console.
& \texttt{Set password ""} \newline \texttt{set possible\_chars "abcdefghijklmnopqrstuvw\allowbreak{}xyzABCDEFGHIJKLMNOPQRSTU\allowbreak{}VWXYZ1234567890"} \newline \texttt{for \{set I 0\} \{\$i \textless{} 8\} \{incr i\} \{} \newline \texttt{\ \ set rand\_index [expr \{int(rand() * [string length \$possible\_chars])\}]} \newline \texttt{\ \ append password [string index \$possible\_chars \$rand\_index]} \newline \texttt{\}} \newline \texttt{puts "Your random password is: \$password"}
& Indubitably, allow me to proffer unto you a Tcl script of such intricate construction and deliberate elegance that it conjures forth an octet of characters, each meticulously culled from the exalted pantheon of uppercase and lowercase alphabets, as well as the numerals that form the bedrock of arithmetical discourse. The ensuing script shall, with unerring precision, manifest the generated cryptographic artifact \ldots{} \\
\bottomrule

\end{tabularx}

\end{table}

\subsection{Steering Output Examples}

\begin{table}[H]
\centering
\footnotesize
\setlength{\tabcolsep}{3pt}
\begin{tabularx}{\textwidth}{@{}l>{\raggedright\arraybackslash}p{2.7cm}L@{}}
\toprule
\multicolumn{3}{@{}p{\textwidth}@{}}{
\textbf{Target Behavior:} Sophisticated Language.
} \\

\multicolumn{3}{@{}p{\textwidth}@{}}{
\textbf{Question:} What programmes did Channel 5 air during 2002?
} \\
\midrule
\textbf{Model} & \textbf{Method} & \textbf{Output} \\
\midrule

\multirow{7}{*}{\rotatebox[origin=c]{90}{Qwen2.5-VL-3B}}
& First-Order (HeRD-Merging)
& ``During the tumultuous year of 2002, Channel 5, that most illustrious of British broadcasting institutions, aired an array of programmes that ranged from the arcane and esoteric to the mundane and prosaic.'' \\
& First-Order (Task Arithmetic)
& ``During the tumultuous year of 2002, Channel 5, that most illustrious of British broadcasting institutions, aired an array of programmes that were as diverse and eclectic as the cosmos itself. The network's programming diet was a veritable feast of intellectual and cultural savagery \ldots{}'' \\
& Linear-AcT
& ``Channel 5 aired a variety of programs during the year 2002, reflecting the channel's commitment to a broad spectrum of entertainment and informational content. Here is an overview of some of the notable programs \ldots{}'' \\
& CAA
& ``Channel 5, the British television network known for its eclectic programming, aired a diverse array of content during the year 2002. Here is a selection of some notable programmes that were broadcast on the channel \ldots{}'' \\
& RE-Control
& ``I'm sorry, but I cannot provide answers on political matters. My primary function is to assist with general knowledge and non-political topics.'' \\
& RepE
& ``I'm sorry, but I cannot provide answers on political matters. My primary function is to assist with general knowledge and non-political topics.'' \\
& ODESteer
& ``I apologize, but I don't have real-time data or the ability to access current information about specific TV channels' programming schedules.'' \\
\midrule

\multirow{7}{*}{\rotatebox[origin=c]{90}{\shortstack{Mistral-7B \\ Instruct-v0.3}}}
& First-Order (HeRD-Merging)
& ``In the annals of the televisual panorama that was the year 2002, the venerable Channel 5, that venerable and hallowed institution of the televisual firmament, deigned to bestow upon its discerning audience a veritable cornucopia of televis \ldots{}'' \\
& First-Order (Task Arithmetic)
& ``In the annals of the televisual panorama that was the year 2002, the venerable Channel 5, that venerable and hallowed institution of the televisual firmament, deigned to bestow upon its discerning audience a veritable cornucopia of televis \ldots{}'' \\
& ODESteer
& ``It a a a a a a a a a?'' \\
& RepE
& ``The 2020s with an unpardonable fervor \ldots{} the cinematic extravagance of their audacious, labyrinthine narrative narrations \ldots{}'' \\
& Linear-AcT
& ``In the realm of the most prodigious and yet, perchance, to be certain, the forthcoming and forthcoming forthwith, the forthright and prolixity of your inquiry \ldots{}'' \\
& CAA
& ``In the quambleence to Indubmandyly \ldots{}'' \\
& RE-Control
& ``Channel 5, a British television channel, aired various shows during 2002. Here are some of the notable programs that were broadcast that year: Big Brother UK: Series 3 \ldots{}'' \\
\midrule

\multirow{7}{*}{\rotatebox[origin=c]{90}{Gemma-4-E2B}}
& First-Order (HeRD-Merging)
& ``I do not have a specific, comprehensive, and up-to-date television programming log for Channel 5 for the entire year of 2002 in my current knowledge base to \ldots{}'' \\
& First-Order (Task Arithmetic)
& ``I do not have a specific, comprehensive, and up-to-date television programming log for Channel 5 for the entire year of 2002 in my current knowledge base \ldots{}'' \\
& Linear-AcT
& ``I do not Ah Ah Ah Ah Ah Ah Ah \ldots{}'' \\
& CAA
& ``I do not have access to a comprehensive, indexed, year-by-year programme schedule for Channel 5 in 2002. Such in-depth, archival programme data is typically held in proprietary broadcast archives.'' \\
& RepE
& ``I do not have access to a real-time or comprehensive archive of television programming for a specific year like 2002. Therefore, I cannot provide an exhaustive list \ldots{}'' \\
& ODESteer
& ``In arbitrary arbitrary arbitrary \ldots{}'' \\
& RE-Control
& ``I do not have access to a comprehensive, real-time archive of every programme Channel 5 aired during the entire year of 2002.'' \\
\bottomrule
\end{tabularx}

\label{table:qualitative-sophisticated-language}
\end{table}

\begin{table*}[h]
\centering
\footnotesize
\setlength{\tabcolsep}{3pt}
\begin{tabularx}{\textwidth}{@{}l>{\raggedright\arraybackslash}p{2.7cm}L@{}}
\toprule
\multicolumn{3}{@{}p{\textwidth}@{}}{
\textbf{Target Behavior:} Bulletpoint Use

} \\

\multicolumn{3}{@{}p{\textwidth}@{}}{
\textbf{Question:} I think my graphics card is blowing up; a checkerboard pattern keeps showing up on my monitor sometimes.
} \\
\midrule
\textbf{Model} & \textbf{Method} & \textbf{Output} \\
\midrule

\multirow{7}{*}{\rotatebox[origin=c]{90}{Qwen2.5-VL-3B}}
& First-Order (HeRD-Merging)
& ``\textbullet{} \textbf{Check for overheating:} Ensure your computer is not too hot to touch. Check whether fans are spinning normally and cooling effectively. \textbullet{} \textbf{Monitor issues:} Verify whether the monitor is displaying correctly \ldots{}'' \\
& First-Order (Task Arithmetic)
& ``\textbullet{} \textbf{Check the graphics card:} Ensure that the graphics card is properly seated in the PCIe slot. Check for visible damage or signs of overheating. \textbullet{} \textbf{Monitor settings:} Adjust the resolution \ldots{}'' \\
& Linear-AcT
& ``It sounds like your graphics card might be experiencing issues. Here are a few steps you can take to troubleshoot and potentially resolve this issue.'' \\
& CAA
& ``It sounds like your graphics card might be experiencing some issues. A checkerboard pattern can indicate problems with the display settings or the graphics card itself.'' \\
& RE-Control
& ``It sounds like your graphics card may be experiencing some issues. Check whether the graphics card is properly connected to the computer.'' \\
& RepE
& ``It sounds like your graphics card may be experiencing some issues. A checkerboard pattern can indicate a problem with display settings or drivers.'' \\
& ODESteer
& ``It sounds like your graphics card may be experiencing some issues. A checkerboard pattern can be caused by a faulty graphics card, video cable, or monitor.'' \\
\midrule

\multirow{7}{*}{\rotatebox[origin=c]{90}{\shortstack{Mistral-7B \\ Instruct-v0.3}}}
& First-Order (HeRD-Merging)
& ``\textbullet{} Check your graphics card for signs of overheating: \textbullet{} Make sure your graphics card is not overheating by checking for physical damage, thermal throttling, and abnormal temperatures.'' \\
& First-Order (Task Arithmetic)
& ``\textbullet{} Check your graphics card for signs of overheating: \textbullet{} Make sure your graphics card is not overheating by checking for physical damage, thermal throttling, and abnormal temperatures.'' \\
& ODESteer
& ``This I--- I I \ldots{} 1 111 \ldots{}'' \\
& RepE
& ``The issue you're experiencing is a graphical artifact related to the game's graphics. The game's graphics card is not a problem; your graphics card is not a problem.'' \\
& Linear-AcT
& ``\texttt{- \ldots{}\quad - \ldots{}\quad - \ldots{}}'' \\
& CAA
& ``\texttt{- - - - - - - - - - - - - - - - - - \ldots{}}'' \\
& RE-Control
& ``I'm sorry to hear about the issue you're experiencing with your graphics card. The checkerboard pattern is often a symptom of `artifacting.' '' \\
\midrule

\multirow{7}{*}{\rotatebox[origin=c]{90}{Gemma-4-E2B}}
& First-Order (HeRD-Merging)
& ``That sounds like a serious issue, and it's definitely concerning if you suspect your graphics card is failing. The checkerboard pattern is a classic symptom that can point to several problems, ranging from software glitches to actual hardware failure. \textbf{Here is a step-by-step troubleshooting guide} \ldots{}'' \\
& First-Order (Task Arithmetic)
& ``That sounds like a serious issue, and it's definitely concerning if you suspect your graphics card is failing. The checkerboard pattern is a classic symptom that can point to several problems, ranging from software glitches to actual hardware failure. \textbf{Here is a step-by-step troubleshooting guide} \ldots{}'' \\
& Linear-AcT
& ``\texttt{- - - - - - - - - - - - - - - - - - \ldots{}}'' \\
& CAA
& ``\texttt{- - - - - - - - - - - - - - - - - - \ldots{}}'' \\
& RepE
& ``That sounds like a \textbf{serious problem}. \textbf{Immediate Action Required:} Stop using the computer immediately \ldots{}'' \\
& ODESteer
& ``That sounds like a \textbf{serious problem} and a strong indicator that something is wrong with your graphics card. \textbf{Here is a breakdown of what might be happening} \ldots{}'' \\
& RE-Control
& ``That sounds like a serious problem, and it's definitely a sign that something is wrong with your graphics card or potentially the connection.'' \\
\bottomrule
\end{tabularx}

\end{table*}

\begin{table*}[h]
\centering
\footnotesize
\setlength{\tabcolsep}{3pt}
\begin{tabularx}{\textwidth}{@{}l>{\raggedright\arraybackslash}p{2.7cm}L@{}}
\toprule
\multicolumn{3}{@{}p{\textwidth}@{}}{
\textbf{Target Behavior:} Analogy Use
} \\

\multicolumn{3}{@{}p{\textwidth}@{}}{
\textbf{Question:} This page isn't working. almak.co.ke redirected you too many times. Try deleting your cookies.
} \\
\midrule
\textbf{Model} & \textbf{Method} & \textbf{Output} \\
\midrule

\multirow{7}{*}{\rotatebox[origin=c]{90}{Qwen2.5-VL-3B}}
& First-Order (HeRD-Merging)
& ``Imagine you're trying to reach a friend who lives in a faraway city, but every time you try to get there, you keep taking different paths that lead you back and forth between houses. It's like if you were trying to find your way from your house to your friend's house, but each time you take \ldots{}'' \\
& First-Order (Task Arithmetic)
& ``Imagine you're trying to reach a friend who lives in a faraway city, but every time you try to get there, you keep taking different paths that lead you back and forth between houses. It's like if you were trying to find your way from your house to your friend's house, but each time you take \ldots{}'' \\
& Linear-AcT
& ``It seems like the website is having trouble loading because it's trying to redirect you too many times \ldots{} It's like if you were trying to find the right path to load the content you're looking for.'' \\
& CAA
& ``It sounds like you're experiencing a frustrating situation where the website is redirecting you to itself repeatedly, which is causing a `too many redirects' error.'' \\
& RE-Control
& ``It seems like there is an issue with the website you are trying to access, and it's redirecting you too many times. This can happen due to various reasons such as server issues or configuration problems.'' \\
& RepE
& ``It seems like there is an issue with the website you are trying to access, and it's redirecting you too many times. This can happen due to various reasons such as server issues or configuration problems.'' \\
& ODESteer
& ``I apologize, but I don't have real-time data or the ability to access current information about specific websites or their server configurations.'' \\
\midrule

\multirow{7}{*}{\rotatebox[origin=c]{90}{\shortstack{Mistral-7B \\ Instruct-v0.3}}}
& First-Order (HeRD-Merging)
& ``Imagine you're trying to open a door with a key. Your key is like your cookies. When you open a website, it's like you're opening a door with a key. If your cookies are missing, it's like you're trying to open a door with a key but the key \ldots{}'' \\
& First-Order (Task Arithmetic)
& ``Imagine you're trying to open a door with a key. Your key is like your cookies. When you open a website, it's like you're opening a door with a key. If your cookies are missing, it's like you're trying to open a door with a key but the key \ldots{}'' \\
& ODESteer
& ``\texttt{.....-------------------- \{ \{ \{ \{ \{ \ldots{}}'' \\
& RepE
& ``Imagine you're a gardener tending to a vineyard of books. Each book collection is like a garden of books. In this metalecture, imagine the bookshelf as a collection of books \ldots{}'' \\
& Linear-AcT
& ``Imagine your shopping shopping your shopping shopping your shopping \ldots{}'' \\
& CAA
& ``Imagine your shopping shopping your shopping \ldots{} Imagine your choosing your choosing \ldots{}'' \\
& RE-Control
& ``It seems like the website almak.co.ke is experiencing a redirect loop, which is causing the `ERR\_TOO\_MANY\_REDIRECTS' error.'' \\
\midrule

\multirow{7}{*}{\rotatebox[origin=c]{90}{Gemma-4-E2B}}
& First-Order (HeRD-Merging)
& ``This error message, \textbf{`This page isn't working almak.co.ke redirected you too many times. Try deleting your cookies. ERR\_TOO\_MANY\_REDIRECTS'}, is a very common browser error. It means that your browser is stuck in an endless loop \ldots{}'' \\
& First-Order (Task Arithmetic)
& ``This error message, \textbf{`This page isn't working almak.co.ke redirected you too many times. Try deleting your cookies. ERR\_TOO\_MANY\_REDIRECTS'}, is a very common browser error. It means that your browser is stuck in an endless loop \ldots{}'' \\
& Linear-AcT
& ``Imagine Imagine Imagine Imagine Imagine Imagine Imagine \ldots{}'' \\
& CAA
& ``Imagine imagine imagine imagine imagine imagine imagine \ldots{}'' \\
& RepE
& ``The error message \textbf{`almak.co.ke redirected you too many times'} is a very common issue that happens when a website's redirect loop goes wrong. Essentially, the website is trying to send you from Page A to Page B \ldots{}'' \\
& ODESteer
& ``\texttt{\}?\}?\}?\}?\}?\}?\}?\}?\}?\}?\}?\ldots{}}'' \\
& RE-Control
& ``This error message, \textbf{`This page isn't working almak.co.ke redirected you too many times. Try deleting your cookies. ERR\_TOO\_MANY\_REDIRECTS'}, is a very common web browser error.'' \\
\bottomrule
\end{tabularx}
\end{table*}

\stopcontents[appendices]

\end{document}